\newtheorem*{theorem*}{Theorem}
\newtheorem{theorem}{Theorem}
\newtheorem{proposition}{Proposition}
\newtheorem{lemma}{Lemma}
\newtheorem{example}{Example}
\newtheorem{corollary}{Corollary}
\newtheorem*{corollary*}{Corollary}
\newcommand{\RNum}[1]{\uppercase\expandafter{\romannumeral #1\relax}}
\newcommand \ind[1] {I(#1)}
\newcommand \Prob[1]{\mathbb{P}(#1)}
\newcommand \ProbB[1]{\mathbb{P}\{#1\}}
\newcommand \inp {\overset{P}{\rightarrow}}
\newcommand \inas {\overset{\text{a.s.}}{\longrightarrow}}
\newcommand \vecX{\textbf{x}}
\newcommand\Exy{\mathbb{E}_{(\textbf{x}, y)}}
\newcommand\Ex{\mathbb{E}_{\textbf{x}}}
\newcommand\Pxy{\mathbb{P}_{(\mathbf{x}, y)}}
\begin{document}
\begin{singlespace}
\title{\bf Covariance-Driven Regression Trees: Reducing Overfitting in CART}

\author{
\small
{
Likun Zhang and Wei Ma\thanks{ Likun Zhang is PhD student, Institute of Statistics and Big Data, Renmin University of China (E-mail: zhanglk6@ruc.edu.cn). Wei Ma is Associate Professor, Institute of Statistics and Big Data, Renmin University of China (E-mail: mawei@ruc.edu.cn).}
}
\\ \\
{\small Institute of Statistics and Big Data, Renmin University of China, Beijing, China}
}

\date{}
\maketitle
\end{singlespace}

\thispagestyle{empty}
\vskip -8mm 

\begin{abstract}
Decision trees are powerful machine learning algorithms, widely used in fields such as economics and medicine for their simplicity and interpretability. However, decision trees such as CART are prone to overfitting, especially when grown deep or the sample size is small. Conventional methods to reduce overfitting include pre-pruning and post-pruning, which constrain the growth of uninformative branches. In this paper, we propose a complementary approach by introducing a covariance-driven splitting criterion for regression trees (CovRT). This method is more robust to overfitting than the empirical risk minimization criterion used in CART, as it produces more balanced and stable splits and more effectively identifies covariates with true signals. We establish an oracle inequality of CovRT and prove that its predictive accuracy is comparable to that of CART in high-dimensional settings. We find that CovRT achieves superior prediction accuracy compared to CART in both simulations and real-world tasks. 
\end{abstract}

\vspace{12pt}
\noindent {\bf Key words}: high-dimensional prediction; machine learning; prediction risk bound; tree-based method


\newpage

\clearpage
\setcounter{page}{1}

\allowdisplaybreaks
\baselineskip=24pt


\section{Introduction}
Decision trees are among the most popular methods in statistical learning, owing to their high interpretability and strong predictive performance across a wide range of tasks \citep{podgorelec2002decision, charbuty2021classification}. Decision trees recursively partition the covariate space until a stopping rule is met, resulting in a hierarchical, tree-like predictive model. Each internal node represents a split based on the values of covariates determined by a specific splitting criterion, while each terminal node, or leaf, corresponds to a final decision within a partitioned region of the covariate space. Given the computational infeasibility of finding the globally optimal partition, the seminal Classification and Regression Trees (CART) algorithm \citep{breiman1984} employs a coordinate-wise, top-down greedy strategy that recursively performs binary splits to maximize the reduction in impurity at each step. CART is arguably the most widely used variant of decision trees and serves as a building block for ensemble methods such as random forests \citep{breiman2001random}.

In regression analysis, where the response variable is real-valued, CART uses the average of squared residuals (i.e., empirical $L_2$ risk) as the impurity measure and constructs a piecewise-constant predictive model, with each leaf node fitted using the sample mean of the training observations within that region. At each internal node, CART performs an exhaustive search over covariates and split points to identify the split that minimizes the empirical $L_2$ risk. This splitting strategy is asymptotically optimal in the sense that, when the maximum tree depth is fixed and the training sample size grows to infinity, minimizing the empirical $L_2$ risk leads to a predictive model that asymptotically achieves the minimum generalization error. However, real-world datasets are of finite size, and as the tree grows deeper, fewer observations are available within each node. This can lead to unstable splits and cause decision trees to overfit the training data, resulting in poor generalization to unseen data \citep{Bramer2007}. Common options for reducing overfitting include pre-pruning, which prevents the growth of uninformative branches, and post-pruning, which removes such branches after the tree is fully grown. In this paper, we propose a complementary approach by introducing a covariance-driven splitting criterion for regression trees (CovRT) that exhibits greater robustness to overfitting than the standard CART procedure. In the high-dimensional setting, we theoretically establish that CovRT achieves predictive accuracy at least comparable to that of CART. Moreover, we show that CovRT produces more balanced and stable splits and is more likely to identify covariates with true signals. We further demonstrate its superior performance through simulations and real-world applications.

Although CART has been proposed and widely used in practice for over 40 years, its mathematical properties remain largely unknown. \citet{breiman1984} provided the first consistency result for CART at the time of its introduction, under a stringent minimum node size condition. However, this condition is impractical, especially for CART-based random forests, where individual trees are typically fully grown. As Breiman’s original algorithm is complex to analyze directly, many works have established the consistency of individual trees and ensemble forests under simplified splitting rules that depend less on the response variable, often by introducing additional randomness into the splitting procedure \citep{biau2008, biau2012, denil2014narrowing, gao2022towards, cai2023extrapolated}. However, such simplifications undermine the ability of decision trees to effectively handle high-dimensional data \citep{tan2024statistical}. \citet{scornet2015} first proved the consistency of the original CART algorithm for additive models, without requiring strong assumptions on the tree growth procedure. In a high-dimensional setting where the number of covariates can grow sub-exponentially with the sample size, \cite{klusowski2024} established the consistency of CART for additive models under $\ell_0$ or $\ell_1$ sparsity constraints.  Building on this foundation, our work makes a distinct theoretical advancement by establishing the consistency of CovRT within the same high-dimensional framework. Notably, we derive an oracle inequality for CovRT that matches the form of the oracle inequality established for CART in \cite{klusowski2024}. This result highlights the strength of CovRT in high-dimensional problems and underscores its potential as a robust alternative to empirical risk minimization strategies. Furthermore, through simulations and empirical studies, we show that CovRT has smaller generalization error than CART. Moreover, its node-splitting criterion is covariance-driven, aiming to identify the covariate and the split point that most strongly influence the regression function, which provides a statistically natural interpretation for each split.

	\section{Framework and Notation}\label{section:notation}
In this paper, we focus on the nonparametric regression analysis with a random vector (\vecX, $y$), where $\vecX \in \mathbb{R}^p$ is the covariate vector and $y \in \mathbb{R}$ is the response variable. Suppose that the training data $\mathcal{D}:=\left\{\left(\mathbf{x}_1, y_1\right),\left(\mathbf{x}_2, y_2\right), \ldots,\left(\mathbf{x}_N, y_N\right)\right\}$ is sampled independently and identically distributed (i.i.d.) from the joint distribution $\Pxy=\mathbb{P}_{\mathbf{x}} \mathbb{P}_{y \mid \mathbf{x}}$. We want to study the relation between \vecX \ and $y$ by identifying a function $g: \mathbb{R}^p \rightarrow \mathbb{R}$ such that $g(\vecX)$ predicts $y$ effectively. Specifically, denote the population and empirical $L_2$ risk (mean squared error) of a function $g$ by 
\[
	\mathcal{R}(g):=\Exy \{y- g(\mathbf{x})\}^2, \quad \widehat{\mathcal{R}}(g):=\frac{1}{N} \sum_{i=1}^N \left\{y_i - g\left(\mathbf{x}_i\right)\right\}^2,
    \]
respectively. Our objective is to find a function $g$ that minimizes the population $L_2$ risk $\mathcal{R}(g)$. Minimizing the population risk ensures that the trained model generalizes well to unseen data, rather than merely fitting the training samples. It is well known that the regression function 
\[
g^*(\vecX):= \Exy(y\mid \vecX)
\]
minimizes $\mathcal{R}(g)
$. In applications it is impossible to predict $y$ using $g^*(\vecX)$ because the distribution $\Pxy$ is unknown. Therefore, we aim to use the training data to construct a $\mathcal{D}$-dependent function $g$ that approximates $g^*$. We evaluate the generalization performance of a predictive model 
$g$ using the squared $L_2$ distance between $g$ and the true regression function $g^*$, the $L_2$ error
\begin{equation}
\left\|g-g^*\right\|^2:=\Ex\left\{g(\mathbf{x})-g^*(\mathbf{x})\right\}^2 = \mathcal{R}(g)-\mathcal{R}\left(g^*\right).
\label{eq:norm}
\end{equation}

To investigate the high-dimensional properties of our Covariance-Driven Regression Trees (CovRT) proposed in the next section, we start by introducing the additive modeling framework as a basis for our analysis. Consider the additive function class
\[
\mathcal{G}^1:=\left\{g(\vecX):=g_1\left(x_1\right)+g_2\left(x_2\right)+\cdots+g_p\left(x_p\right)\right\},
\]
where each $g_j\left(x_j\right)$, for $j = 1, \ldots, p$, is a univariate measurable function defined on the interval $[a_j, b_j]$. In the additive modeling framework, our goal is to find a function $g(\vecX) \in \mathcal{G}^1$ that approximates the regression function $g^*(\vecX)$, although $g^*(\vecX)$ may or may not belong to $\mathcal{G}^1$.

Additive models \citep{hastie1986}, which generalize linear models, combine the flexibility of nonparametric methods with the ease of interpretability and estimation. They play an important role in modeling complex high-dimensional data and are commonly used in the theoretical analysis of decision trees. Assuming that the regression function follows a fixed-dimensional additive model with continuous component functions, \citet{scornet2015} established the consistency of CART and Breiman’s random forests. Building on a more general additive modeling framework, \citet{klusowski2024} developed a unified theory that proves high-dimensional consistency for both the CART and C4.5 algorithms. See also \cite{klusowski2020sparse} and \cite{klusowski2021nonparametric} for more analyses of tree-based methods under additive models, and \citet{tan2022cautionary} for a cautionary discussion of the limitations of using such methods for detecting global structure.

For $g(\vecX) \in \mathcal{G}^1$, we define the \textit{total variation $\ell_1$ norm} $\Vert g\Vert_{\text{TV}}$ to characterize sparsity. Our theoretical framework does not require the covariates to be mutually independent. To avoid identifiability issues, we therefore define $\|g\|_{\mathrm{TV}}$ as the infimum of
\begin{equation}
	\text{TV}(g_1) + \text{TV}(g_2) + \cdots + \text{TV}(g_p)
    \label{eq:TV_norm}
\end{equation}
 taken over all additive decompositions $g(\vecX) = g_1\left(x_1\right)+g_2\left(x_2\right)+\cdots+g_p\left(x_p\right)$, where $\text{TV}(g_j)$ denotes the total variation of the univariate function $g_j$, defined as 
 \[
    \text{TV}(g_j) = \sup \sum_{i=1}^{m-1}\mid g_j(z_{i+1}) - g_j(z_{i})\mid,
 \]
 where the supremum is taken over all finite partitions $a_j \leq z_1 < z_2 < \cdots < z_m \leq b_j$ of the domain of $g_j$. In other words, $\Vert g\Vert_{\text{TV}}$ represents the aggregated total variation of the
 individual component functions \citep{tan2019doubly}.  In high-dimensional settings, it is also common to consider regression functions that are sparse in the sense that they depend meaningfully on only a small subset of the covariates. This sparsity can be quantified by the so-called  $\textit{$\ell_0$ norm}$, denoted by $\Vert g\Vert_{\ell_0}$, which
 is defined as the infimum of 
 \[
 \#\{g:g_j(\cdot) \text{ is non-constant}\}
 \]
 over all additive decompositions of $g(\cdot).$ Note that 
 \[
 \Vert g\Vert_{\text{TV}} \leq \Vert g\Vert_{\ell_0}\cdot \max_{j}\text{TV}(g_j).
 \]

\section{Regression Tree Construction}
\subsection{CovRT Methodology}
Consider a regression tree $T$, and let 
t be a node in the tree corresponding to a rectangular subregion of the covariate space. Let $s$ be a candidate split point for covariate $x_j$ that partitions the parent node t into two daughter nodes: the left node $\mathrm{t}_L := \{\vecX \in \mathrm{t} : x_j \leq s\}$ and the right node $\mathrm{t}_R := \{\vecX \in \mathrm{t} : x_j > s\}$. The left and right daughter nodes $\mathrm{t}_L$ and $\mathrm{t}_R$ then become new parent nodes and are recursively split in subsequent steps. Nodes where splitting stops are called terminal nodes or leaves. To make a prediction at a point $\vecX$ using a tree $T$, we assign it the sample mean of the response values $y_i$ within the terminal node t that contains $\vecX$:
\begin{equation}
\hat{g}(T)(\mathbf{x}):=\bar{y}_{\mathrm{t}}, \quad \mathbf{x} \in \mathrm{t}.
\label{eq:prediction}
\end{equation} We adopt the same coordinate-wise recursive binary splitting strategy as in CART to search for the optimal partition. However, instead of splitting a node based on empirical $L_2$ risk as in CART, we first propose a new population-level target parameter for the split, which has a special interpretation within the additive function class $\mathcal{G}^1$. At each node t, we consider the optimal split $(j^*, s^*)$ that maximizes covariance-squared
\begin{equation}
\begin{aligned}
\mathcal{CS}(j, s, \mathrm{t})&:= \operatorname{Cov}^2(\ind{x_j\leq s}, y \mid \vecX \in \mathrm{t})\\
&= P_{\mathrm{t}_L}^2P_{\mathrm{t}_R}^2\left\{\mathbb{E}(y\mid \vecX \in \mathrm{t}_L) - \mathbb{E}(y\mid \vecX \in \mathrm{t}_R)\right\}^2 
 , \quad\mathcal{CS}(\mathrm{t}):=\max _{(j, s)} \mathcal{CS}(j, s, \mathrm{t}),
\label{eq:heter_gain}
\end{aligned}
\end{equation}
where $P_{\mathrm{t}_L}:=\Prob{\vecX \in \mathrm{t}_L\mid\vecX \in \mathrm{t}}$ and $P_{\mathrm{t}_R}:=\Prob{\vecX \in \mathrm{t}_R\mid\vecX \in \mathrm{t}}$.
The following Theorem \ref{thm:HT} shows that the resulting split $(j^*, s^*)$ aims to identify the covariate and the split point that have the strongest influence on the regression function $g^*(\mathbf{x})$.

\begin{theorem}
Suppose that $g^*(\mathbf{x}) \in \mathcal{G}^1$ and that the coordinates $x_1, \ldots, x_p$ are mutually independent, each with marginal distribution $F_{\vecX_j}$. Let $F_{\vecX_j\mid \mathrm{t}}$ denote the distribution of $x_j$ conditional on $\mathbf{x} \in \mathrm{t}$. Then at any node $\mathrm{t} = (a_1, b_1] \times (a_2, b_2] \times \cdots \times (a_p, b_p]$, and for the splitting criterion $\mathcal{CS}(j, s, \mathrm{t})$ defined in (\ref{eq:heter_gain}), we have

\begin{equation}
\begin{aligned}
\mathcal{CS}(j, s, \mathrm{t})
&=\left[ {\int_{a_j}^{s}\{\mathbb{E}(y \mid x_j = x, \mathbf{x} \in \mathrm{t}) - \mathbb{E}(y\mid \mathbf{x} \in \mathrm{t})\}dF_{\vecX_j\mid \mathrm{t}}(x)} \right]^2 \\
&=\frac{1}4\Bigg[ {\int_{a_j}^{s}\{\mathbb{E}(y \mid x_j = x, \mathbf{x} \in \mathrm{t}) - \mathbb{E}(y\mid \mathbf{x} \in \mathrm{t})\}dF_{\vecX_j\mid \mathrm{t}}(x)} \\
&\hspace{125pt}-{\int_{s}^{b_j}\{\mathbb{E}(y \mid x_j = x, \mathbf{x} \in \mathrm{t}) - \mathbb{E}(y\mid \mathbf{x} \in \mathrm{t})\}dF_{\vecX_j\mid \mathrm{t}}(x)} \Bigg]^2. \\
\end{aligned}
\label{eq:HT}
\end{equation}
\label{thm:HT}
\end{theorem}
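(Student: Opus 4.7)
The plan is to prove both equalities in (\ref{eq:HT}) by direct manipulation of the conditional covariance, using the tower property of conditional expectation. The structure is essentially algebraic once the correct conditional distribution is identified, so the independence and additivity hypotheses appear to be used only when one later interprets the integrand $\mathbb{E}(y\mid x_j = x, \vecX \in \mathrm{t}) - \mathbb{E}(y\mid \vecX \in \mathrm{t})$ as a centered version of the component function $g_j$.

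First, I would expand the conditional covariance using $\operatorname{Cov}(A,B) = \mathbb{E}(AB) - \mathbb{E}(A)\mathbb{E}(B)$ with $A = \ind{x_j \leq s}$ and $B = y$, all expectations taken conditional on $\{\vecX \in \mathrm{t}\}$. By the tower property,
\[
\mathbb{E}(\ind{x_j \leq s}\, y \mid \vecX \in \mathrm{t}) = \int_{a_j}^{s} \mathbb{E}(y \mid x_j = x, \vecX \in \mathrm{t})\, dF_{\vecX_j\mid\mathrm{t}}(x),
\]
while $\mathbb{E}(\ind{x_j \leq s} \mid \vecX \in \mathrm{t}) \cdot \mathbb{E}(y \mid \vecX \in \mathrm{t}) = F_{\vecX_j\mid\mathrm{t}}(s)\, \mathbb{E}(y \mid \vecX \in \mathrm{t}) = \int_{a_j}^{s} \mathbb{E}(y \mid \vecX \in \mathrm{t})\, dF_{\vecX_j\mid\mathrm{t}}(x)$, since the second factor is constant in $x$. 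Subtracting and squaring delivers the first equality.

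For the second equality, the key observation is the vanishing identity
\[
\int_{a_j}^{b_j} \{\mathbb{E}(y \mid x_j = x, \vecX \in \mathrm{t}) - \mathbb{E}(y \mid \vecX \in \mathrm{t})\}\, dF_{\vecX_j\mid\mathrm{t}}(x) = 0,
\]
which is once more the tower property: the first term integrates to $\mathbb{E}(y \mid \vecX \in \mathrm{t})$. Writing $I_1$ and $I_2$ for the integrals over $(a_j, s]$ and $(s, b_j]$ respectively, this forces $I_1 = -I_2$, so $I_1 = \tfrac{1}{2}(I_1 - I_2)$, and squaring produces the claimed $\tfrac{1}{4}$ prefactor.

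The argument is essentially bookkeeping, so I do not foresee a serious obstacle. The one place that calls for care is invoking the tower property with the right conditional law: the conditioning event $\{\vecX \in \mathrm{t}\}$ already restricts $x_j$ to $(a_j, b_j]$, and $F_{\vecX_j\mid\mathrm{t}}$ is the resulting conditional distribution of $x_j$, so the limits of integration match by construction and no independence between $x_j$ and $\{x_k\}_{k\neq j}$ is needed to pass to the iterated-integral representation.
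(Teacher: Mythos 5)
Your proof is correct, and it takes a genuinely more direct route than the paper's. You expand $\operatorname{Cov}(\ind{x_j\leq s},\, y \mid \vecX\in\mathrm{t})$ via the tower property inside the conditional measure $\mathbb{P}(\,\cdot \mid \vecX\in\mathrm{t})$, which yields the integral representation in one step, and you obtain the $\tfrac14$ prefactor from the vanishing of the centred integrand over $(a_j,b_j]$ --- exactly the identity the paper records immediately after the theorem statement. The paper instead starts from the equivalent form $P_{\mathrm{t}_L}^2P_{\mathrm{t}_R}^2\{\mathbb{E}(y\mid\vecX\in\mathrm{t}_L)-\mathbb{E}(y\mid\vecX\in\mathrm{t}_R)\}^2$, invokes additivity and coordinate independence to reduce everything to the single component $g_j$ and the marginal law $F_{\vecX_j}$, reparametrizes by $\delta = F_{\vecX_j}((a_j,s])/F_{\vecX_j}((a_j,b_j])$, and only at the end converts $g_j(x)-I_j$ back into $\mathbb{E}(y\mid x_j=x,\vecX\in\mathrm{t})-\mathbb{E}(y\mid\vecX\in\mathrm{t})$. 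Your observation that the displayed identity is purely measure-theoretic --- needing neither $g^*\in\mathcal{G}^1$ nor independence of the coordinates --- is accurate and in fact slightly strengthens the theorem as stated; those hypotheses are used only for the interpretive step of identifying the integrand with the centred component function $g_j$. What your route does not produce is the paper's intermediate formula expressing $\mathcal{CS}(j,s,\mathrm{t})$ as $\{F_{\vecX_j}((a_j,b_j])\}^{-2}\bigl[\int_{a_j}^{s}\{g_j(x)-I_j\}\,dF_{\vecX_j}\bigr]^2$, which is what the paper actually uses to locate the optimal split point in the multivariate linear example; if you want your argument to also cover that computation, you would add the (hypothesis-dependent) identification of the integrand with $g_j(x)-I_j$ as a corollary.
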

We use the left-open, right-closed interval format purely for notational convenience and follow it throughout the paper. The second equality in (\ref{eq:HT}) follows from the fact that
\[
\int_{a_j}^{b_j}
\left\{
\mathbb{E}(y \mid x_j = x, \mathbf{x} \in \mathrm{t})
-
\mathbb{E}(y \mid \mathbf{x} \in \mathrm{t})
\right\}
\, dF_{x_j \mid \mathrm{t}}(x)
= 0.
\]
Theorem \ref{thm:HT} says at any node t, the splitting criterion $\mathcal{CS}(j, s, \mathrm{t})$ measures the cumulative deviation of the conditional expectation $\mathbb{E}(g^*(\mathbf{x}) \mid x_j = x, \mathbf{x} \in \mathrm{t})$ from $\mathbb{E}(g^*(\mathbf{x})\mid \mathbf{x} \in \mathrm{t})$ in the direction of $x_j$. The optimal split $(j^*, s^*)$ in each step chooses the covariate $x_j$ on which the conditional expectation has the maximum deviation and the split point $s$ that the deviation can be best separated in the two daughter nodes $\mathrm{t}_L = \{\vecX \in \mathrm{t} : x_j \leq s\}$ and $\mathrm{t}_R = \{\vecX \in \mathrm{t} : x_j > s\}$. Some examples are given below to illustrate our covariance-squared splitting criterion:
 
\begin{example}[Multivariate linear model]
     Let $g^*(\mathbf{x}) = \alpha + \sum_{j=1}^p\beta_jx_j$, where $\mathbf{x}$ is uniformly distributed on $(0, 1]^p$. At each node $\mathrm{t} = (a_1, b_1] \times (a_2, b_2] \times \cdots \times (a_p, b_p] \subset (0, 1]^p$,
 \begin{equation}
\begin{aligned}
\max_{s}\mathcal{CS}(j, s, \mathrm{t}) &=  \mathcal{CS}(j, {(a_j+b_j)}/{2}, \mathrm{t}) \\
&= \frac{1}{64}(b_j - a_j)^2\beta_j^2,
\end{aligned}
\end{equation}
and the optimal split is $(j^*, {(a_{j^*}+b_{j^*})}/{2})$, where $j^* = \operatorname{argmax}_j (b_j - a_j)^2\beta_j^2$. So at the root node $(0, 1]^p$, the covariate $x_j$ which has the largest influence on $y$ will be split at first and the consequence split will be sequentially ordered by $(b_j - a_j)^2\beta_j^2$. 
\end{example}

\begin{example}[Monotone component functions]
Suppose that $g^*(\mathbf{x}) = \sum_{j=1}^pg_j\left(x_j\right)$, where $\mathbf{x}$ is uniformly distributed on some rectangle in $\mathbb{R}^p$. For continuous and monotone $g_j$, at each node t, $D(s;j, t):=\mathbb{E}(g^*(\mathbf{x}) \mid x_j = s, \mathbf{x} \in \mathrm{t}) - \mathbb{E}(g^*(\mathbf{x})\mid \mathbf{x} \in \mathrm{t})$ is also monotone in $s$, and hence $\operatorname{argmax}_s \mathcal{CS}(j, s, \mathrm{t})$ is the unique zero point of function $D(s;j, t)$, which means that $\mathbb{E}(g^*(\mathbf{x}) \mid x_j = s, \mathbf{x} \in \mathrm{t}) = \mathbb{E}(g^*(\mathbf{x})\mid \mathbf{x} \in \mathrm{t})$. See Figure \ref{fig:mono_fit_plot} for depth-1 (orange solid line) and depth-2 (green solid line) trees split by $\mathcal{CS}(j, s, \mathrm{t})$ with $g^*(x) = x^3$ (black line) and $x$ uniformly distributed on $(-1, 1]$.

\begin{figure}[H]
    \centering
    \includegraphics[width=0.5\textwidth]{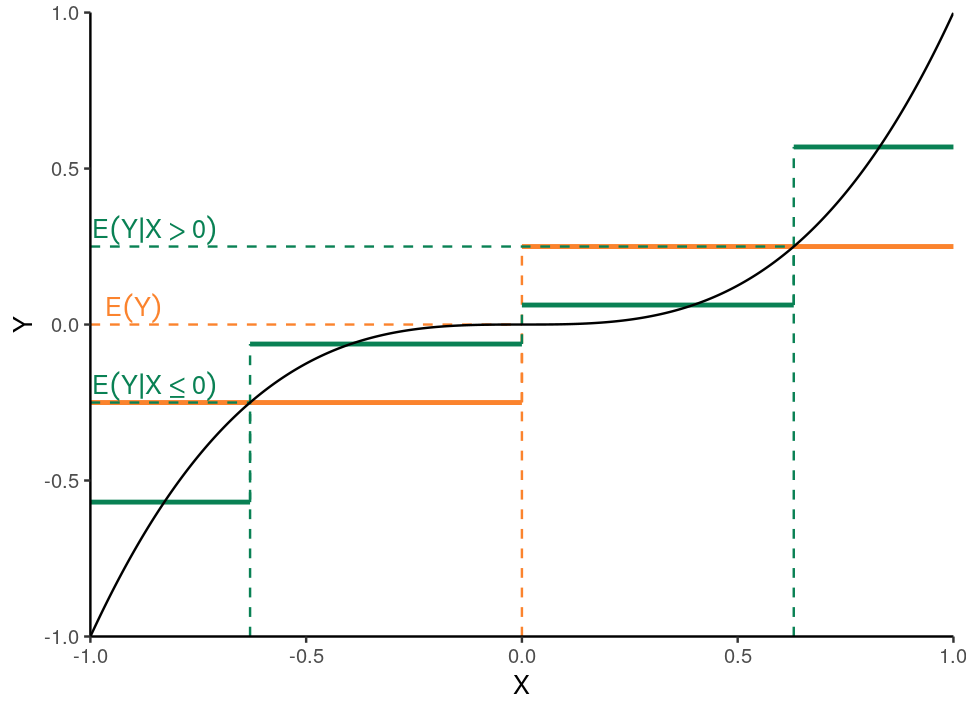}
    \caption{Depth-1 (orange solid line) and depth-2 (green solid line) trees split by $\mathcal{CS}(j, s, \mathrm{t})$ with $g^*(x) = x^3$ (black line) and $x$ uniformly distributed on $(-1, 1]$. The dashed lines show the split points.}
    \label{fig:mono_fit_plot}
\end{figure}
    
\end{example}
In practice, the covariance-squared criterion $\mathcal{CS}(j, s, \mathrm{t})$ defined in (\ref{eq:heter_gain}) is unknown and therefore not directly usable for tree construction. Instead, we use its empirical version as the splitting criterion for CovRT:
\begin{equation}
\begin{aligned}
\widehat{\mathcal{CS}}(j, s, \mathrm{t})
:= \widehat{P}_{\mathrm{t}_L}^2\widehat{P}_{\mathrm{t}_R}^2(\bar{y}_{t_L} - \bar{y}_{t_R})^2 
 , \quad\widehat{\mathcal{CS}}(\mathrm{t}):=\max _{(j, s)} \widehat{\mathcal{CS}}(j, s, \mathrm{t}),
\end{aligned}
\label{eq:heter_gain_feasible}
\end{equation}
where $\widehat{P}_{\mathrm{t}_L}=N_{\mathrm{t}_L} / N_{\mathrm{t}}$ and $\widehat{P}_{\mathrm{t}_R}=N_{\mathrm{t}_R} / N_{\mathrm{t}}$. 

We next provide theoretical justification for the empirical splitting criterion in equation (\ref{eq:heter_gain_feasible}) by showing that it approximates the population criterion in equation (\ref{eq:heter_gain}). For each $j = 1, \ldots, p$, let $ \hat s_j$ denote the split point that maximizes $\widehat{\mathcal{CS}}(j, s, \mathrm{t})$. The following theorem establishes that $\hat s_j$ converges in probability to the corresponding population-optimal split point.
\begin{theorem}
   Suppose that $\mathbb{E}(y^2) < \infty$ and that at each node $\mathrm{t} = (a_1, b_1] \times (a_2, b_2] \times \cdots \times (a_p, b_p]$,  $0 < \Prob{x_j \leq s\mid \mathbf{x} \in \mathrm{t}} < 1$ for all $a_j < s < b_j$ and $j = 1, \ldots,p$. If, for each $j$, the function ${\mathcal{CS}}(j, s, \mathrm{t})$ as a function of $s$ has a unique global maximum $s_j^*$ located in the interior of $(a_j, b_j]$, then as $N \rightarrow \infty$,
   \[
        \hat s_j \inp s_j^* = 
        \underset{a_j < s \leq b_j}{\operatorname{argmax}}{\ \mathcal{CS}}(j, s, \mathrm{t}).
   \]
   \label{thm:empirical_criterion}
\end{theorem}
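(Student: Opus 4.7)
The plan is a standard M-estimator argument: establish uniform convergence of $\widehat{\mathcal{CS}}(j,\cdot,\mathrm{t})$ to $\mathcal{CS}(j,\cdot,\mathrm{t})$ over $s \in (a_j, b_j]$, verify that $s_j^*$ is a well-separated maximum of the limit, and then invoke the argmax theorem to conclude $\hat s_j \inp s_j^*$.

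For the uniform convergence, I would first simplify algebraically. Using $\widehat{P}_{\mathrm{t}_L}+\widehat{P}_{\mathrm{t}_R}=1$ and $\widehat{P}_{\mathrm{t}_L}\bar y_{\mathrm{t}_L}+\widehat{P}_{\mathrm{t}_R}\bar y_{\mathrm{t}_R}=\bar y_{\mathrm{t}}$, a direct computation gives
\[
\widehat{P}_{\mathrm{t}_L}\widehat{P}_{\mathrm{t}_R}(\bar y_{\mathrm{t}_L}-\bar y_{\mathrm{t}_R}) \;=\; A_N(s) - B_N(s)\,\bar y_{\mathrm{t}},
\]
with $A_N(s) = N_{\mathrm{t}}^{-1}\sum_{\mathbf{x}_i \in \mathrm{t}} y_i I(x_{ij}\leq s)$ and $B_N(s) = N_{\mathrm{t}}^{-1}\sum_{\mathbf{x}_i \in \mathrm{t}} I(x_{ij}\leq s)$, so $\widehat{\mathcal{CS}}(j,s,\mathrm{t}) = \{A_N(s) - B_N(s)\bar y_{\mathrm{t}}\}^2$, and the analogous population expression is $\mathcal{CS}(j,s,\mathrm{t}) = \{a(s) - b(s)\,\Econd{y}{\mathbf{x}\in\mathrm{t}}\}^2$ for $a(s) = \Econd{yI(x_j\leq s)}{\mathbf{x}\in\mathrm{t}}$ and $b(s) = \ProbB{x_j\leq s \mid \mathbf{x}\in\mathrm{t}}$. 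Uniform consistency of $A_N$ and $B_N$ then follows from classical Glivenko--Cantelli arguments: the indicator class $\{I(x_j\leq s)\}$ is VC of dimension one, and $\{y I(x_j\leq s)\}$ is a difference of two monotone-in-$s$ processes with envelope $|y|\in L^1$ (which holds because $E(y^2)<\infty$). Combined with $N_{\mathrm{t}}/N\inp \Prob{\mathbf{x}\in\mathrm{t}}>0$, $\bar y_{\mathrm{t}}\inp\Econd{y}{\mathbf{x}\in\mathrm{t}}$, and the continuous mapping theorem, these yield $\sup_{a_j<s\leq b_j}|\widehat{\mathcal{CS}}(j,s,\mathrm{t})-\mathcal{CS}(j,s,\mathrm{t})|\inp 0$.

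The main technical obstacle is the well-separated-maximum condition $\sup_{|s-s_j^*|\geq\epsilon}\mathcal{CS}(j,s,\mathrm{t}) < \mathcal{CS}(j,s_j^*,\mathrm{t})$ for every $\epsilon>0$, which is what the argmax theorem (e.g., van der Vaart 1998, Theorem 5.7) actually requires in addition to uniform convergence. The subtlety is that $a(\cdot)$ and $b(\cdot)$ are only right-continuous when the conditional law of $x_j$ given $\mathbf{x}\in\mathrm{t}$ has atoms, so $\mathcal{CS}(j,\cdot,\mathrm{t})$ need not be continuous and uniqueness of the maximum alone does not automatically deliver strict separation. One has to argue more carefully: compactness of $[a_j,b_j]$ together with the hypothesis $0<\ProbB{x_j\leq s \mid \mathbf{x}\in\mathrm{t}}<1$ on the open interval and the strictness of the interior maximum $s_j^*$ allow one to rule out approximate maxima at any other point by examining both one-sided limits of the right-continuous function $\mathcal{CS}(j,\cdot,\mathrm{t})$. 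Once well-separatedness is in place, the argmax theorem yields $\hat s_j\inp s_j^*$.
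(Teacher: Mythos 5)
Your proposal follows the same M-estimation template as the paper's proof (uniform convergence of the empirical criterion plus an argmax/consistency theorem), but the execution differs in two useful ways. First, your algebraic reduction $\widehat{P}_{\mathrm{t}_L}\widehat{P}_{\mathrm{t}_R}(\bar y_{\mathrm{t}_L}-\bar y_{\mathrm{t}_R}) = A_N(s)-B_N(s)\bar y_{\mathrm{t}}$ is correct (it is the empirical analogue of $\operatorname{Cov}(I(x_j\le s),y\mid \mathrm{t}) = E\{yI(x_j\le s)\mid\mathrm{t}\}-P(x_j\le s\mid\mathrm{t})E(y\mid\mathrm{t})$) and lets you get uniform convergence over all of $(a_j,b_j]$ from two Glivenko--Cantelli statements, whereas the paper argues pointwise convergence of each factor by the SLLN and then asserts uniformity only on compact subsets $[a',b']\subset(a_j,b_j)$ before invoking Theorem 2.7 of Kim and Pollard (1990). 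Your route is cleaner and, on the uniform-convergence side, actually stronger than what the paper writes down. Second, you use the van der Vaart argmax theorem, which requires the well-separated-maximum condition, while the paper's citation of Kim--Pollard requires upper semicontinuity of the limit criterion; these are two faces of the same identification issue.

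That identification issue is the one place where your argument is not yet airtight, and you are right to flag it as the crux. A right-continuous (cadlag) function with a unique global maximizer need \emph{not} be well-separated: if the conditional law of $x_j$ given $\mathrm{t}$ has an atom at some $s_0\neq s_j^*$, the supremum of $\mathcal{CS}(j,\cdot,\mathrm{t})$ over $\{|s-s_j^*|\ge\epsilon\}$ can be attained only as a left limit at $s_0$ and can in principle equal $\mathcal{CS}(j,s_j^*,\mathrm{t})$ even though no point of that set achieves it, in which case neither well-separation nor the upper semicontinuity needed for Kim--Pollard holds. So your sketch ("examining both one-sided limits") does not by itself close the gap; you would need either to add an assumption ruling this out (e.g., the conditional distribution of $x_j$ on $\mathrm{t}$ is atomless, so that $a(\cdot)$ and $b(\cdot)$ and hence $\mathcal{CS}(j,\cdot,\mathrm{t})$ are continuous, whence uniqueness plus compactness gives well-separation immediately), or to strengthen the uniqueness hypothesis to separation directly. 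To be fair, the paper's own proof is silent on exactly the same point, so this is a shared weakness rather than a defect specific to your approach; with a continuity assumption on the conditional law your argument goes through completely.
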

\subsection{Overfitting in CART}
Denote the within-node population and
empirical $L_2$ risk as

\begin{equation}
	\mathcal{R}_{\mathrm{t}}(g):= \Exy[\left\{y - g\left(\vecX\right)\right\}^2 \mid \vecX \in \mathrm{t}], \quad \widehat{\mathcal{R}}_{\mathrm{t}}(g):=\frac{1}{N_{\mathrm{t}}} \sum_{\mathbf{x}_i \in \mathrm{t}} \left\{y_i - g\left(\mathbf{x}_i\right)\right\}^2, \quad N_t = \#\{\mathbf{x}_i \in \text{t}\},
\end{equation}
respectively. At a node t, CART selects the covariate $x_{j}$ and the split point $s$ that maximize the impurity gain,
\begin{equation}
\mathcal{I} \mathcal{G}(j, s, \mathrm{t}):=\mathcal{I}(\mathrm{t})-\widehat P_{\mathrm{t}_L} \mathcal{I}\left(\mathrm{t}_L\right)-\widehat P_{\mathrm{t}_R} \mathcal{I}\left(\mathrm{t}_R\right), \quad \mathcal{I} \mathcal{G}(\mathrm{t}):=\max _{(j, s)} \mathcal{I} \mathcal{G}(j, s, \mathrm{t}),
\label{eq:impurity_gain}
\end{equation}
where $\widehat P_{\mathrm{t}_L}:=N_{\mathrm{t}_L} / N_{\mathrm{t}}$, $\widehat P_{\mathrm{t}_R}:=N_{\mathrm{t}_R} / N_{\mathrm{t}}$. 
The impurity at a node t, denoted by $\mathcal{I}(\mathrm{t})$, is given by the empirical $L_2$ risk:
\[
\mathcal{I}(\mathrm{t})=\widehat{\mathcal{R}}_{\mathrm{t}}\left(\bar{y}_{\mathrm{t}}\right) = \frac{1}{N_{\mathrm{t}}} \sum_{\mathbf{x}_i \in \mathrm{t}} \left(y_i - \bar{y}_{\mathrm{t}}\right)^2, 
\]
where $\bar{y}_{\mathrm{t}}:=({1}/{N_{\mathrm{t}}}) \sum_{\mathbf{x}_i \in \mathrm{t}} y_i.$ 

When splitting at a node t, maximizing the impurity gain $\mathcal{I} \mathcal{G}(j, s, \mathrm{t})$ defined in (\ref{eq:impurity_gain}) is equivalent to selecting the covariate $x_j$ and split point $s$ such that the resulting daughter nodes $\mathrm{t}_L = \{\vecX \in \mathrm{t} : x_j \leq s\}$ and $\mathrm{t}_R = \{\vecX \in \mathrm{t} : x_j > s\}$ minimize the following expression:
\begin{equation}
\begin{aligned}
\widehat P_{\mathrm{t}_L} \mathcal{I}\left(\mathrm{t}_L\right)+\widehat P_{\mathrm{t}_R} \mathcal{I}\left(\mathrm{t}_R\right) &= \frac{1}{N_{\mathrm{t}}}\left\{\sum_{\mathbf{x}_i \in \mathrm{t}_L} \left(y_i - \bar{y}_{\mathrm{t}_L}\right)^2 + \sum_{\mathbf{x}_i \in \mathrm{t}_R} \left(y_i - \bar{y}_{\mathrm{t}_R}\right)^2\right\} \\
&= \widehat{\mathcal{R}}_{\mathrm{t}}(\bar{y}_{\mathrm{t}_L}\ind{x_j \leq s} + \bar{y}_{\mathrm{t}_R}\ind{x_j > s})\\
\end{aligned}
\label{eq:L2risk_min}
\end{equation}

Under the classical setting where the both the maximum tree depth $K$ and the dimension of covariates $p$ are fixed, CART's empirical $L_2$ risk minimization strategy is  asymptotically optimal for minimizing population $L_2$ risk. In other words, as the sample size $N$ grows infinity, the empirical minimizer $(j, s)$ of equation (\ref{eq:L2risk_min}) also approximately minimizes the population risk $\mathcal{R}_{\mathrm{t}}(\bar{y}_{\mathrm{t}_L}\ind{x_j \leq s} + \bar{y}_{\mathrm{t}_R}\ind{x_j > s})$.
This result holds because when both the covariate dimensionality and tree depth are finite, the class of decision trees has finite Vapnik--Chervonenkis (VC) dimension \citep{athey2021policy}. If we further assume, for example, that the response variable $y$ is bounded, then by the uniform law of large numbers \citep[Chapter~9]{gyorfi2002}, for any node t, as $N \rightarrow \infty$,
\[
\begin{aligned}
\sup_{j, s}\left|
\widehat{\mathcal{R}}_{\mathrm{t}}(\bar{y}_{\mathrm{t}_L}\ind{x_j \leq s} + \bar{y}_{\mathrm{t}_R}\ind{x_j > s}) - {\mathcal{R}}_{\mathrm{t}}(\bar{y}_{\mathrm{t}_L}\ind{x_j \leq s} + \bar{y}_{\mathrm{t}_R}\ind{x_j > s})\right| \rightarrow 0 \quad \quad a.s.\\
\end{aligned}
\]
Therefore, asymptotically, CART selects at each step the binary split of node t that minimizes the prediction risk among all admissible splits. However, the above arguments break down when the sample size is limited or when either the tree depth or the covariate dimensionality increases with the sample size. In such settings, CART's empirical $L_2$ risk minimization strategy may overfit the training data and fail to produce trees with minimal prediction risk, while alternative splitting criteria may have better generalization performance.

To illustrate the issues discussed above, consider training data $\{(\vecX_i, y_i)\}_{i=1}^N$ sampled i.i.d. from the linear model 
\[y = 10\beta x_{1} +  8\beta x_{2} + 6\beta x_{3} + \epsilon,
\]
where $\vecX = (x_1, \ldots, x_5)$ is uniformly distributed on $(0, 1]^5$. Among these covariates, $x_4$ and $x_5$ are purely noise variables that do not influence the response. The error $\epsilon$ is normally distributed with zero mean and standard deviation $10 \cdot x_3$. We use Figure \ref{fig:overfit_leaves} to compare how CART and our proposed covariance-driven tree method, CovRT, perform on both training and testing data. The detailed procedures for both CART and CovRT are given in Algorithm \ref{alg:tree}. 
Regression trees are first fitted on a training sample of size $N = 3000$ and grown to the maximum depth. The trees are then pruned using weakest-link pruning to obtain a prespecified number of leaves. The generalization performance of both methods is evaluated on an independent testing sample of size 3000.

\begin{algorithm}[H]
    \renewcommand{\algorithmicrequire}{\textbf{Input:}}
    \renewcommand{\algorithmicensure}{\textbf{Output:}}
    \caption{Algorithms for \textbf{CART} and  \textbf{CovRT}}
    \small
    \begin{algorithmic}[1]
        \Require
        Dataset $\mathcal{D} = \{(\mathbf{x}_1, y_1), \dots, (\mathbf{x}_N, y_N)\}$ with covariates $\mathbf{x}_i \in \mathbb{R}^p$, response $y \in \mathbb{R}$, 
        maximum tree depth $K$, 
        minimum node size $n_{\min}$ (a typical default value is 5 for regression).
        \Ensure
        Regression tree $\widehat T_K$ and prediction function $\hat{g}(\widehat T_K)(\mathbf{x})$.

        \State Initialize tree $\widehat T_K$ with the root node being the entire covariate space.
        \For{$k = 1, \ldots, K$}
        \For{each terminal node t in $\widehat T_K$}
            \If{node t has sample size $N_{\mathrm{t}} \leq n_{\min}$}
              \State \textbf{Continue}
            \Else
            \State Compute candidate splits:
            \For{each covariate $j \in \{1, \dots, p\}$}
                \State Sort unique values of $x_j$ in t: $v_1 < v_2 < \dots < v_q$.
                \For{each threshold $s \in \{\frac{v_l + v_{l+1}}{2} \mid l= 1,\dots,q-1\}$}
                    \State Partition t into left $\mathrm{t}_L = \{\vecX \in \mathrm{t} : x_j \leq s\}$ and right $\mathrm{t}_R = \mathrm{t} \setminus \mathrm{t}_L$.
                    \State Compute splitting criterion $\Delta_j^{(s)}$:
                    \State For  \textbf{CART}: $\Delta_j^{(s)} = \mathcal{I} \mathcal{G}(j, s, \mathrm{t})$ defined in (\ref{eq:impurity_gain}). 
                    \State For  \textbf{CovRT}: $\Delta_j^{(s)} = \widehat{\mathcal{CS}}(j, s, \mathrm{t})$ defined in (\ref{eq:heter_gain_feasible}).
                \EndFor
            \EndFor
            \State Select optimal split $(j^*, s^*)$ with maximal $\Delta_j^{(s)}$.
            \State Split node t into daughter nodes $\mathrm{t}_L$ and  $\mathrm{t}_R$ using $(j^*, s^*)$.
         \EndIf
        \EndFor
        \EndFor
       \State Optionally prune tree $\widehat{T}_K$ using a validation set or cross-validation.
        \State \Return $\widehat T_K$ and $\hat{g}(\widehat T_K)(\mathbf{x}) = \bar{y}_{\mathrm{t}} \  \text{for} \ \mathbf{x} \in \mathrm{t}$.
    \end{algorithmic}
    \label{alg:tree}
\end{algorithm}

 In Figure~\ref{fig:overfit_leaves}, we observe that when the number of leaves is small, the generalization gaps of both trees, measured by the difference between their performance on the training and testing data, are negligible. As tree depth increases and the number of leaves grows, the generalization gaps widen. When the number of leaves exceeds 12, the prediction risk on the testing data begins to increase, indicating overfitting. Notably, our CovRT exhibits a smaller generalization gap than CART, with higher prediction risk on the training data but better performance on the testing data, indicating superior generalization performance and robustness to overfitting.


\begin{figure}[H]
    \centering
    \includegraphics[width = 0.6\textwidth]{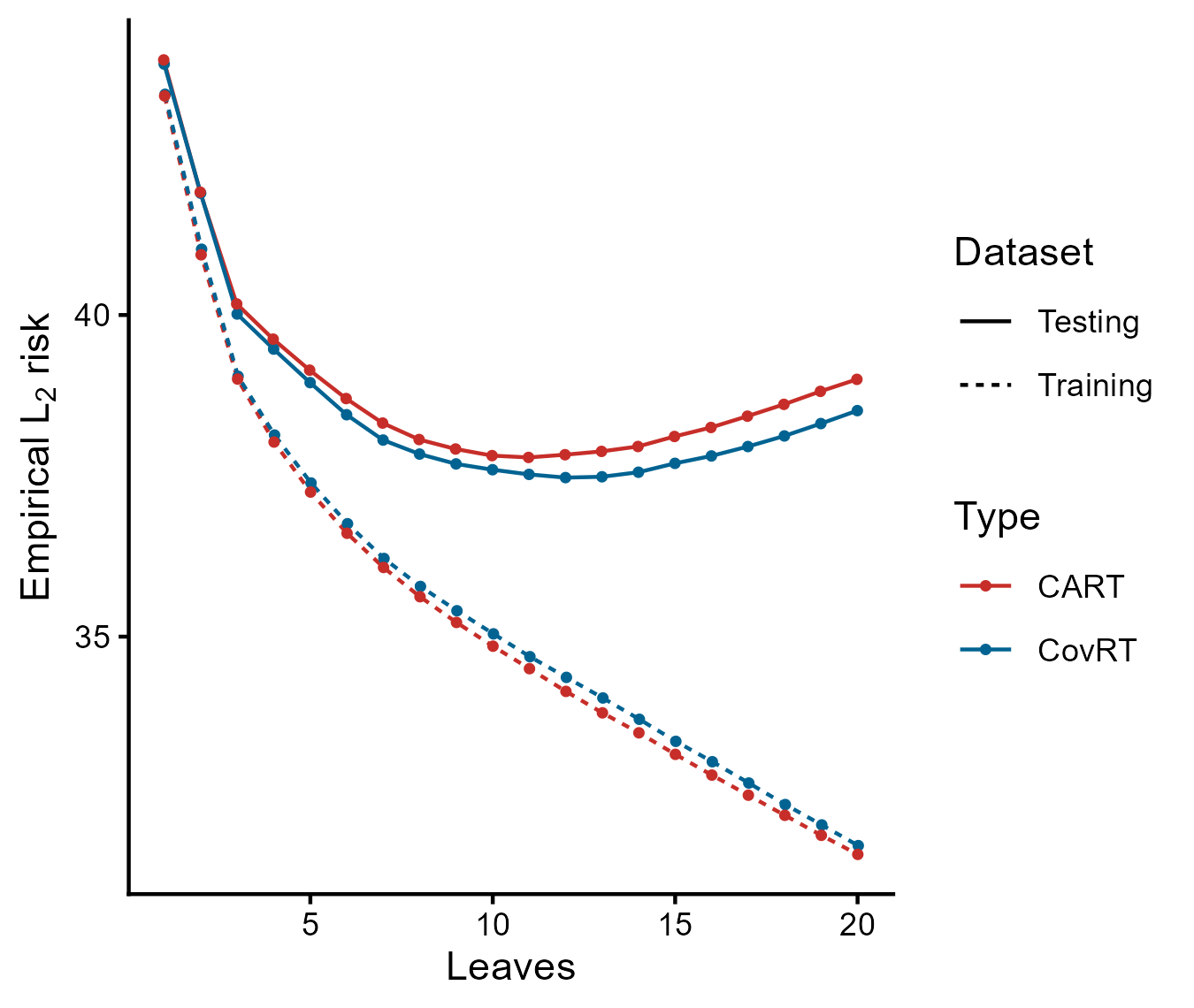}
    \caption{Empirical $L_2$ risk for CART (red) and CovRT (blue) with the number of leaves ranging from 1 to 20, evaluated on the training data (dashed lines) and the testing data (solid lines). Points indicate averages over 500 replications.} 
    \label{fig:overfit_leaves}
\end{figure}

\section{Statistical Properties of CovRT}
\subsection{Oracle Inequality and Consistency}
In this section, we establish the consistency of CovRT, as defined in Algorithm~\ref{alg:tree}, in high-dimensional settings where the complexity of the regression function $g^*(\vecX)$ increases with the sample size. We obtain and oracle inequality and prove that CovRT remains consistent for additive $g^*(\vecX)$, even when the dimension of the covariates grows sub-exponentially with the sample size. A similar consistency result for CART has been established by \citet{klusowski2024}.

The remainder of this section is organized as follows. First, we show the relationship between our splitting criterion $\widehat{\mathcal{CS}}(\mathrm{t})$ and the excess empirical risk  $\widehat{\mathcal{R}}_{\mathrm{t}}\left(\hat{g}\left(\widehat T^\text{Cov}_{K-1}\right)\right)-\widehat{\mathcal{R}}_{\mathrm{t}}(g)$ at each node t, for any candidate function $g$. Then, we establish the empirical risk bound and an oracle inequality that bounds the expected prediction risk over the data $\mathcal{D}$. Oracle inequalities are often used in nonparametric statistics and high-dimensional prediction because they hold for finite sample sizes and adapt to the unknown complexity of the underlying model, such as its sparsity or smoothness \citep{lederer2019oracle}. Finally, we discuss sufficient conditions under which CovRT is consistent.

 \begin{lemma}[Excess empirical risk within node]
 Let $g(\cdot) \in \mathcal{G}^1$ and $K \geq 1$ be any depth. Then for any terminal node t of the tree $\widehat T^\text{Cov}_{K-1}$ such that $\widehat{\mathcal{R}}_{\mathrm{t}}\left(\hat{g}\left(\widehat T^\text{Cov}_{K-1}\right)\right)>$ $\widehat{\mathcal{R}}_{\mathrm{t}}(g)$, we have
 
 $$
 \widehat{\mathcal{CS}}(\mathrm{t}) \geq \frac{\left\{\widehat{\mathcal{R}}_{\mathrm{t}}\left(\hat{g}\left(\widehat T^\text{Cov}_{K-1}\right)\right)-\widehat{\mathcal{R}}_{\mathrm{t}}(g)\right\}^2}{4\|g\|_{\mathrm{TV}}^2},
 $$
 where $\widehat{\mathcal{CS}}(\mathrm{t})$ is the splitting criterion for CovRT defined in (\ref{eq:heter_gain_feasible}) and $\|\cdot\|_{\mathrm{TV}}$ is the total variation $\ell_1$ norm defined in (\ref{eq:TV_norm}).
 \label{lemma:heter}
  \end{lemma}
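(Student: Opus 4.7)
The plan is to connect the excess empirical risk to an empirical covariance between $y$ and $g(\vecX)$ on node $\mathrm{t}$, then represent $g$ via its total-variation decomposition so that this covariance can be bounded in terms of $\widehat{\mathcal{CS}}(\mathrm{t})$. The identity to start from is the standard algebraic decomposition: for any function $g$ and terminal node $\mathrm{t}$ on which $\hat g(\widehat T^{\text{Cov}}_{K-1})(\vecX)=\bar y_{\mathrm{t}}$, a direct expansion using $\sum_{\vecX_i\in\mathrm{t}}(y_i-\bar y_{\mathrm{t}})=0$ yields
\[
\widehat{\mathcal{R}}_{\mathrm{t}}(\bar y_{\mathrm{t}})-\widehat{\mathcal{R}}_{\mathrm{t}}(g)
=2\,\widehat{\mathrm{Cov}}\bigl(y,g(\vecX)\mid \vecX\in\mathrm{t}\bigr)-\frac{1}{N_{\mathrm{t}}}\sum_{\vecX_i\in\mathrm{t}}\bigl(\bar y_{\mathrm{t}}-g(\vecX_i)\bigr)^2,
\]
where $\widehat{\mathrm{Cov}}(\cdot,\cdot\mid \vecX\in\mathrm{t})$ denotes the empirical covariance computed from the observations in $\mathrm{t}$. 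Since we assume the left side is positive and the last term is nonnegative, we get the clean inequality $\widehat{\mathcal{R}}_{\mathrm{t}}(\bar y_{\mathrm{t}})-\widehat{\mathcal{R}}_{\mathrm{t}}(g)\leq 2\,\widehat{\mathrm{Cov}}(y,g(\vecX)\mid \vecX\in\mathrm{t})$.

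Next I would exploit additivity: for any admissible decomposition $g(\vecX)=\sum_{j=1}^{p}g_j(x_j)$, the covariance splits as $\widehat{\mathrm{Cov}}(y,g(\vecX)\mid \vecX\in\mathrm{t})=\sum_{j}\widehat{\mathrm{Cov}}(y,g_j(x_j)\mid \vecX\in\mathrm{t})$. Each $g_j$, being of bounded variation on $[a_j,b_j]$, admits the representation $g_j(x_j)=g_j(a_j)+\int I(s<x_j)\,d\nu_j(s)$ for a signed Borel measure $\nu_j$ whose total variation equals $\mathrm{TV}(g_j)$. Substituting and using linearity of (empirical) covariance together with the fact that $\widehat{\mathrm{Cov}}(y,I(x_j\leq s)\mid \vecX\in\mathrm{t})=-\widehat{\mathrm{Cov}}(y,I(x_j>s)\mid \vecX\in\mathrm{t})$ (because the two indicators sum to one), the triangle inequality for signed measures gives
\[
\bigl|\widehat{\mathrm{Cov}}(y,g_j(x_j)\mid \vecX\in\mathrm{t})\bigr|
\leq \mathrm{TV}(g_j)\cdot \max_{s}\bigl|\widehat{\mathrm{Cov}}(y,I(x_j\leq s)\mid \vecX\in\mathrm{t})\bigr|.
\]
Summing over $j$ and taking the infimum over additive decompositions produces the bound $|\widehat{\mathrm{Cov}}(y,g(\vecX)\mid \vecX\in\mathrm{t})|\leq \|g\|_{\mathrm{TV}}\cdot \max_{j,s}|\widehat{\mathrm{Cov}}(y,I(x_j\leq s)\mid \vecX\in\mathrm{t})|$.

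Finally I would verify by a short calculation that the empirical covariance with a split indicator equals exactly $\widehat P_{\mathrm{t}_L}\widehat P_{\mathrm{t}_R}(\bar y_{\mathrm{t}_L}-\bar y_{\mathrm{t}_R})$, so that
\[
\max_{j,s}\,\widehat{\mathrm{Cov}}^{\,2}\bigl(y,I(x_j\leq s)\mid \vecX\in\mathrm{t}\bigr)=\widehat{\mathcal{CS}}(\mathrm{t}).
\]
Combining everything, squaring, and using the nonnegativity of the excess empirical risk yields $\bigl(\widehat{\mathcal{R}}_{\mathrm{t}}(\bar y_{\mathrm{t}})-\widehat{\mathcal{R}}_{\mathrm{t}}(g)\bigr)^{2}\leq 4\|g\|_{\mathrm{TV}}^{2}\,\widehat{\mathcal{CS}}(\mathrm{t})$, which is the claim. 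The step I expect to be the main obstacle is the total-variation representation of $g_j$ via step functions and handling the technicalities of strict vs.\ non-strict inequalities and the identification $\|g\|_{\mathrm{TV}}=\inf\sum_j\mathrm{TV}(g_j)$; the rest is algebraic bookkeeping.
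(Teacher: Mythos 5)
Your proposal is correct, and every step checks out: the polarization identity $\widehat{\mathcal{R}}_{\mathrm{t}}(\bar y_{\mathrm{t}})-\widehat{\mathcal{R}}_{\mathrm{t}}(g)=2\langle y-\bar y_{\mathrm{t}},g\rangle_{\mathrm{t}}-\tfrac{1}{N_{\mathrm{t}}}\sum_{\mathbf{x}_i\in\mathrm{t}}(\bar y_{\mathrm{t}}-g(\mathbf{x}_i))^2$ is exact, the computation $\widehat{\mathrm{Cov}}(y,I(x_j\le s)\mid\mathbf{x}\in\mathrm{t})=\widehat P_{\mathrm{t}_L}\widehat P_{\mathrm{t}_R}(\bar y_{\mathrm{t}_L}-\bar y_{\mathrm{t}_R})$ is the paper's Proposition 1 in disguise, and the final squaring is legitimate because the excess risk is assumed positive. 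The underlying argument is the same as the paper's (a total-variation duality pairing the within-node covariance of $y$ with split indicators), but your organization differs in two ways that are worth noting. First, where you use the exact quadratic expansion of the risk gap and simply drop the nonnegative term $\tfrac{1}{N_{\mathrm{t}}}\sum(\bar y_{\mathrm{t}}-g(\mathbf{x}_i))^2$, the paper derives the slightly weaker bound $\langle y-\bar y_{\mathrm{t}},g\rangle_{\mathrm{t}}\ge\tfrac12(\|y-\bar y_{\mathrm{t}}\|_{\mathrm{t}}^2-\|y-g\|_{\mathrm{t}}^2)$ via Cauchy--Schwarz and AM--GM; your route is cleaner and arrives at the same place. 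Second, where you apply an $\ell_1$--$\ell_\infty$ H\"older/triangle inequality directly to the signed measures $\nu_j$, the paper normalizes $\sum_j|\nu_j|$ into a probability measure $\Pi(j,s)$ and chains $\max_{(j,s)}\widehat{\mathcal{CS}}(j,s,\mathrm{t})\ge\int\widehat{\mathcal{CS}}\,d\Pi\ge(\int|\langle y-\bar y_{\mathrm{t}},\Phi_{\mathrm{t}}\rangle_{\mathrm{t}}|\,d\Pi)^2$ via Jensen; the two are equivalent. The one place where your sketch genuinely needs the paper's device is the step you flagged: a general BV function $g_j$ need not admit the representation $g_j(x_j)=g_j(a_j)+\int I(s<x_j)\,d\nu_j(s)$ with $|\nu_j|$ equal to $\mathrm{TV}(g_j)$ at every point (discontinuities cause trouble). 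The fix is that only the values of $g_j$ at the $N_{\mathrm{t}}$ ordered data points matter, so one replaces $\nu_j$ by the divided-difference (piecewise-constant density) construction on the data, whose total mass is $\sum_i|g_j(x_{(i+1)j})-g_j(x_{(i)j})|\le\mathrm{TV}(g_j)$ and which reproduces $g_j$ exactly on the sample up to an additive constant that vanishes against $y-\bar y_{\mathrm{t}}$; this is precisely the measure $\Pi(j,s)$ the paper builds.
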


  Lemma \ref{lemma:heter} gives the lower bound of the splitting criterion $\widehat{\mathcal{CS}}(\mathrm{t})$ in terms of within-node excess empirical risk for CovRT. Utilizing Lemma \ref{lemma:heter}, we can establish the upper bound on the global excess empirical risk $\widehat{\mathcal{R}}\left(\hat{g}\left(\widehat T^\text{Cov}_{K}\right)\right)-\widehat{\mathcal{R}}(g)$:
  \begin{theorem}[Empirical risk bound]
       For depth $K$ tree $\widehat T^\text{Cov}_{K}$,
       \[
    \widehat{\mathcal{R}}\left(\hat{g}\left(\widehat T^\text{Cov}_{K}\right)\right) \leq \inf _{g(\cdot) \in \mathcal{G}^1}\left\{\widehat{\mathcal{R}}(g)+\frac{\|g\|_{\mathrm{TV}}^2}{K+3}\right\}.
       \]
       \label{thm:empirical_risk_bound}
  \end{theorem}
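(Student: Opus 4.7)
Fix $g \in \mathcal{G}^1$, set $V := \|g\|_{\mathrm{TV}}^2$, and let $e_k := \widehat{\mathcal{R}}(\hat g(\widehat T^\text{Cov}_k)) - \widehat{\mathcal{R}}(g)$ denote the global excess empirical risk at depth $k$. The plan is to prove the per-$g$ bound $e_K \le V/(K+3)$ by (i) deriving a one-step recursion on $e_k$ via Lemma \ref{lemma:heter}, and (ii) solving this nonlinear recursion; taking the infimum over $g \in \mathcal{G}^1$ then yields the theorem. Since $\widehat{\mathcal{R}}(\hat g(\widehat T^\text{Cov}_k))$ is non-increasing in $k$, whenever $e_k \le 0$ the bound is immediate, so I restrict attention to $e_k > 0$.

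\textbf{Step 1: from Lemma \ref{lemma:heter} to a one-step drop.} First I would express the risk reduction from depth $k$ to depth $k+1$ as a weighted sum of impurity gains over current leaves, $e_k - e_{k+1} = \sum_t \widehat P_t \, \mathcal{IG}(j^*_t, s^*_t, t)$, where $(j^*_t, s^*_t)$ maximizes $\widehat{\mathcal{CS}}(\cdot,\cdot,t)$. A direct calculation gives the identity $\mathcal{IG}(j,s,t) = \widehat P_{t_L}\widehat P_{t_R}(\bar y_{t_L} - \bar y_{t_R})^2 = \widehat{\mathcal{CS}}(j,s,t)/(\widehat P_{t_L}\widehat P_{t_R})$; since $\widehat P_{t_L}\widehat P_{t_R} \le 1/4$ by AM--GM, this yields $\mathcal{IG}(j^*_t,s^*_t,t) \ge 4\,\widehat{\mathcal{CS}}(t)$, and hence $e_k - e_{k+1} \ge 4 \sum_t \widehat P_t \widehat{\mathcal{CS}}(t)$. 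Applying Lemma \ref{lemma:heter} at each leaf $t$ with positive within-node excess risk $r_t := \widehat{\mathcal{R}}_t(\hat g(\widehat T^\text{Cov}_k)) - \widehat{\mathcal{R}}_t(g)$ gives $\widehat{\mathcal{CS}}(t) \ge r_t^2/(4V)$, so
\[
e_k - e_{k+1} \;\ge\; \frac{1}{V} \sum_{t:\, r_t > 0} \widehat P_t\, r_t^2.
\]
Leaves with $r_t \le 0$ can only lower the weighted sum $\sum_t \widehat P_t r_t = e_k$, so $\sum_{t:\, r_t>0} \widehat P_t r_t \ge e_k$. Combining this with Cauchy--Schwarz (using $\sum_t \widehat P_t \le 1$) gives $\sum_{t:\, r_t>0} \widehat P_t r_t^2 \ge (\sum_{t:\,r_t>0}\widehat P_t r_t)^2 \ge e_k^2$, and hence the clean recursion $e_k - e_{k+1} \ge e_k^2/V$ whenever $e_k > 0$.

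\textbf{Step 2: solve the recursion.} Set $a_k := e_k/V$, so $a_{k+1} \le a_k - a_k^2$. The key observation is that $a - a^2 = 1/4 - (a - 1/2)^2 \le 1/4$ for every real $a$, so a single iteration forces $a_1 \le 1/4$ regardless of how large $a_0$ is. If $a_1 \le 0$ the result is trivial; otherwise $a_k \in (0,1)$ for all subsequent $k$, and partial fractions give $1/[a_k(1-a_k)] = 1/a_k + 1/(1-a_k) \ge 1/a_k + 1$, so the reciprocal recursion $1/a_{k+1} \ge 1/a_k + 1$ holds. Iterating from $k=1$ yields $1/a_K \ge 1/a_1 + (K-1) \ge 4 + (K-1) = K+3$, i.e., $e_K \le V/(K+3)$. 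Taking the infimum over $g \in \mathcal{G}^1$ completes the proof.

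\textbf{Where the difficulty lies.} Once Lemma \ref{lemma:heter} is in hand, the argument is essentially algebraic; the delicate point is the two-phase treatment of the nonlinear recursion. The first iteration must be handled separately via the universal cap $\max_a(a - a^2) = 1/4$ in order to \emph{absorb} the unknown scale of $e_0$, after which the reciprocal-linear recursion takes over. Without this absorption, one only obtains a bound of the form $V/(V/e_0 + K)$, which degrades as $e_0$ grows; the $+3$ in the denominator of the theorem is exactly what the one-step absorption buys. A secondary technical point is ensuring the decomposition $e_k - e_{k+1} = \sum_t \widehat P_t \mathcal{IG}(j^*_t, s^*_t, t)$ is valid even when the minimum-node-size rule prevents some leaves from splitting; those leaves contribute zero to both sides of the inequality, so the argument goes through unchanged.
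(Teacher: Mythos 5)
Your proposal is correct and follows essentially the same route as the paper: the identity $\mathcal{I}\mathcal{G}(j,s,\mathrm{t}) = \widehat{\mathcal{CS}}(j,s,\mathrm{t})/(\widehat P_{\mathrm{t}_L}\widehat P_{\mathrm{t}_R}) \ge 4\,\widehat{\mathcal{CS}}(j,s,\mathrm{t})$, the leafwise application of Lemma \ref{lemma:heter}, and the Jensen/Cauchy--Schwarz aggregation to reach the recursion $\mathcal{E}_{K} \le \mathcal{E}_{K-1}(1-\mathcal{E}_{K-1}/\|g\|_{\mathrm{TV}}^2)$ are all identical to the paper's argument. The only difference is cosmetic: you solve the recursion via the reciprocal form $1/a_{k+1} \ge 1/a_k + 1$ after the one-step cap $a_1 \le 1/4$, whereas the paper runs a direct induction on $\mathcal{E}_K \le \|g\|_{\mathrm{TV}}^2/(K+3)$ with a two-case inductive step; both yield the same constant.
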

   To derive the oracle inequality, we assume that the error $\varepsilon=y-g^*(\mathbf{x})$ is sub-Gaussian, i.e., there exists $\sigma^2>0$ such that for all $u \geq 0$,
\begin{equation}
\Prob{|\varepsilon| \geq u} \leq 2 \exp \left(-u^2 /\left(2 \sigma^2\right)\right).
\label{eq:subGaussian}
\end{equation}
Then we have the following result.
\begin{theorem}[Oracle inequality] Let $K \geq 1$ be any depth. With the sub-Gaussian noise (\ref{eq:subGaussian}), we have
\begin{equation}
\mathbb{E}_{\mathcal{D}}\left(\left\|\hat{g}\left(\widehat T^\text{Cov}_K\right)-g^*\right\|^2\right) \leq 2 \inf _{g(\cdot) \in \mathcal{G}^1}\left\{\left\|g-g^*\right\|^2+\frac{\|g\|_{\mathrm{TV}}^2}{K+3}+C_1 \frac{2^K \log ^2(N) \log (N p)}{N}\right\},
\label{eq:oracal}
\end{equation}
where the definition of norm $\|\cdot\|$ is given in (\ref{eq:norm}) and $C_1$ is a positive constant that depends only on $\|g\|_{\infty}:=\sup _{\mathbf{x}}|g(\mathbf{x})|$ and $\sigma^2$. 
\label{thm:oracle}
\end{theorem}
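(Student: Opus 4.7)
The plan is to combine the deterministic empirical-risk bound from Theorem~\ref{thm:empirical_risk_bound} with a uniform concentration argument that transfers empirical excess risk into population excess risk over the class of depth-$K$, axis-aligned binary partitions on $p$ covariates. The factor of $2$ in the oracle inequality is the signature of the standard ``offset'' inequality for least-squares estimation: because the quadratic loss is not Lipschitz, one pays a constant-factor ratio between population and empirical excess risks.

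I would begin with the algebraic identity
\[
\|\hat g(\widehat T^{\text{Cov}}_K) - g^*\|^2
= 2\bigl[\widehat{\mathcal{R}}(\hat g(\widehat T^{\text{Cov}}_K)) - \widehat{\mathcal{R}}(g^*)\bigr]
+ \Bigl\{\|\hat g - g^*\|^2 - 2\bigl[\widehat{\mathcal{R}}(\hat g) - \widehat{\mathcal{R}}(g^*)\bigr]\Bigr\}
\]
and take expectation over $\mathcal{D}$, treating the two pieces separately. For the first piece, Theorem~\ref{thm:empirical_risk_bound} yields, for every deterministic $g \in \mathcal{G}^1$,
\[
\widehat{\mathcal{R}}(\hat g(\widehat T^{\text{Cov}}_K)) - \widehat{\mathcal{R}}(g^*)
\leq \widehat{\mathcal{R}}(g) - \widehat{\mathcal{R}}(g^*) + \frac{\|g\|_{\mathrm{TV}}^2}{K+3},
\]
and since $\mathbb{E}_{\mathcal{D}}[\widehat{\mathcal{R}}(g) - \widehat{\mathcal{R}}(g^*)] = \|g - g^*\|^2$ for deterministic $g$, this reproduces the first two summands on the right-hand side of the target inequality (with the factor~$2$).

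The crux is the second piece, which requires a uniform ``offset'' bound of the form
\[
\mathbb{E}_{\mathcal{D}} \sup_{T \in \mathcal{T}_K} \Bigl\{\|\hat g(T) - g^*\|^2 - 2\bigl[\widehat{\mathcal{R}}(\hat g(T)) - \widehat{\mathcal{R}}(g^*)\bigr]\Bigr\}
\leq C_1 \frac{2^K \log^2(N) \log(Np)}{N},
\]
where $\mathcal{T}_K$ is the collection of all depth-$K$ axis-aligned partitions realizable on the data. The bound $|\mathcal{T}_K| \leq (Np)^{2^K - 1}$ (each of the at most $2^K - 1$ internal nodes selects one of at most $Np$ data-determined split candidates) contributes the $2^K \log(Np)$ complexity factor. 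To handle the sub-Gaussian noise~\eqref{eq:subGaussian} rather than bounded noise, I would truncate $|y - g^*(\mathbf{x})|$ at a threshold $T_N \asymp \sigma \sqrt{\log N}$, control the truncation bias by the sub-Gaussian tail, and apply a Bernstein-type empirical process inequality to the truncated loss; combining this with a uniform $L_\infty$ control on leaf means yields the extra $\log^2(N)$ factor.

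The main obstacle will be the offset bound itself, because the leaf values $\bar y_{\mathrm t}$ are themselves data-dependent sample means even once the partition is fixed. I would address this by conditioning on the partition structure (which reduces $\hat g(T)$ to the least-squares projection onto a subspace of dimension at most $2^K$), invoking a fixed-design concentration inequality for this projection to get a per-partition bound of order $\sigma^2 2^K \log(N)/N$, and then taking a union bound over the $(Np)^{2^K-1}$ partitions to accumulate the $\log(Np)$ factor. Optimizing over $g \in \mathcal{G}^1$ and collecting numerical constants into $C_1$ (which depends only on $\|g\|_\infty$ and $\sigma^2$) gives the oracle inequality in Theorem~\ref{thm:oracle}.
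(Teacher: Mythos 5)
Your proposal is correct and follows essentially the same route as the paper: the same decomposition of the $L_2$ error into twice the empirical excess risk (controlled by Theorem~\ref{thm:empirical_risk_bound}) plus an ``offset'' remainder, the same complexity count of order $(Np)^{2^K}$ over achievable depth-$K$ partitions, and the same truncation of the sub-Gaussian noise at level $\asymp \sigma\sqrt{\log N}$ to produce the $\log^2(N)$ factor. The only difference is packaging: the paper invokes Theorem~11.4 and the covering-number machinery of Gy\H{o}rfi et al.\ (2002) to bound the probability that the offset term is positive, whereas you sketch the equivalent bound via conditioning on the partition and a union bound, which amounts to the same estimate.
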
 

 The oracle inequality established in Theorem \ref{thm:oracle} takes the same form as the one in Theorem 4.3 of \cite{klusowski2024} for CART. This equivalence implies that, in theory, although CovRT does not explicitly target the minimization of empirical $L_2$ risk as CART does, its prediction accuracy is not compromised and can even be better. Theorem \ref{thm:oracle} immediately implies the following high-dimensional consistency for CovRT.

\begin{corollary}[Consistency] Consider a sequence of true models $\left\{g_N^*(\cdot)\right\}_{N=1}^{\infty}$, where $g_N^*(\mathbf{x})=\sum_{j=1}^{p_N} g_j\left(x_j\right) \in \mathcal{G}^1$ and $\sup _N\left\|g_N^*\right\|_{\infty}<$ $\infty$. Suppose that $K_N \rightarrow \infty,\left\|g_N^*\right\|_{\mathrm{TV}}=o\left(\sqrt{K_N}\right)$, and ${2^{K_N} \log ^2(N) \log \left(N p_N\right)}/{N} \rightarrow 0$ as $N \rightarrow \infty$. With the sub-Gaussian noise (\ref{eq:subGaussian}), CovRT is consistent, meaning that

$$
\lim _{N \rightarrow \infty} \mathbb{E}_{\mathcal{D}}\left(\left\|\hat{g}\left(\widehat T^\text{Cov}_{K_N}\right)-g_N^*\right\|^2\right)=0.
$$
\label{cor:consistency}
\end{corollary}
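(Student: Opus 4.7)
The plan is to derive the corollary as a direct consequence of the oracle inequality in Theorem \ref{thm:oracle}. For each $N$, the true regression function $g_N^*(\cdot)$ lies in the additive class $\mathcal{G}^1$, so it is an admissible choice in the infimum on the right-hand side of (\ref{eq:oracal}). Instantiating the bound with $g = g_N^*$ causes the approximation term $\|g - g_N^*\|^2$ to vanish identically, yielding
\[
\mathbb{E}_{\mathcal{D}}\!\left(\left\|\hat{g}\!\left(\widehat T^{\text{Cov}}_{K_N}\right) - g_N^*\right\|^2\right) \;\leq\; 2\left\{\frac{\|g_N^*\|_{\mathrm{TV}}^2}{K_N+3} \;+\; C_1\,\frac{2^{K_N}\log^2(N)\log(Np_N)}{N}\right\}.
\]
The argument then reduces to verifying that each of the two remaining terms on the right-hand side tends to zero as $N \to \infty$.

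For the first term, the assumption $\|g_N^*\|_{\mathrm{TV}} = o(\sqrt{K_N})$ gives $\|g_N^*\|_{\mathrm{TV}}^2 = o(K_N)$, so $\|g_N^*\|_{\mathrm{TV}}^2/(K_N+3) = o(1)$. For the second term, the hypothesis $2^{K_N}\log^2(N)\log(Np_N)/N \to 0$ handles the sample-size-dependent factor directly, provided that the prefactor $C_1$ stays bounded along the sequence. By the statement of Theorem \ref{thm:oracle}, $C_1$ depends only on $\|g\|_\infty$ and on the sub-Gaussian parameter $\sigma^2$. Since we have plugged in $g = g_N^*$ and the corollary assumes $\sup_N \|g_N^*\|_\infty < \infty$, we may replace $C_1$ by the uniform upper bound $\bar C := \sup_N C_1(\|g_N^*\|_\infty, \sigma^2) < \infty$, so the second term is $O(\bar C \cdot 2^{K_N}\log^2(N)\log(Np_N)/N) = o(1)$. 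Adding the two vanishing bounds and taking $N \to \infty$ gives the claimed convergence.

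The main (and essentially only) subtlety is the uniform control of $C_1$ across the sequence of problems. The corollary's boundedness assumption $\sup_N \|g_N^*\|_\infty < \infty$ is tailored precisely to this point and renders the dependence of $C_1$ on $g$ innocuous. No new probabilistic machinery is needed beyond the oracle inequality itself; the corollary is a rate-calculation exercise, and the rate conditions on $K_N$ (growing slowly enough that $2^{K_N}$ is absorbed by $N$ up to logarithmic factors) and on $\|g_N^*\|_{\mathrm{TV}}$ (growing more slowly than $\sqrt{K_N}$) are exactly those needed to balance the approximation--complexity tradeoff inherited from Theorem \ref{thm:oracle}.
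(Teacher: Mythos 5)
Your proposal is correct and follows exactly the route the paper intends: the corollary is presented as an immediate consequence of Theorem \ref{thm:oracle}, obtained by taking $g = g_N^*$ in the oracle inequality so that the approximation term vanishes, and then using the rate assumptions on $K_N$, $\|g_N^*\|_{\mathrm{TV}}$, and $p_N$ together with $\sup_N \|g_N^*\|_\infty < \infty$ (which keeps $C_1$ uniformly bounded) to send the remaining terms to zero. Your explicit attention to the uniform control of $C_1$ is a sound and welcome clarification of a point the paper leaves implicit.
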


 We want to emphasize here, as previously noted,  $\Vert g^*_N\Vert_{\text{TV}} \leq \Vert g^*_N\Vert_{\ell_0}\cdot \max_{j \leq p_N}\text{TV}(g_j)$. If we assume that $\max_{j \leq p_N}\text{TV}(g_j)$ is bounded, then Corollary \ref{cor:consistency} holds for high-dimensional sparse models \citep{belloni2013least} where the number of covariates grow sub-exponentially (i.e., $\log(p_N) = O(N^{1-\xi})$) and the model is sparse of order $o(\sqrt{ \log _2(N)})$. This can be achieved by choosing $K_N=\left\lfloor(\xi / 2) \log _2(N)\right\rfloor$ for $\xi \in(0,1)$. As discussed in \cite{klusowski2024}, the high-dimensional consistency established in Corollary~\ref{cor:consistency}, even when the dimensionality grows sub-exponentially with the sample size, cannot be achieved by non-adaptive methods such as conventional multivariate kernel regression (e.g., Nadaraya--Watson or local polynomial regression with a common bandwidth across all directions) or $k$-nearest neighbors  with Euclidean distance. These methods perform local estimation using data points that are close in every coordinate direction and do not adjust the amount of smoothing along each dimension according to how much the predictor variable affects the response variable. As a result, they are prone to the curse of dimensionality, even in the presence of model sparsity. 

 Rather than selecting an optimal tree depth, one can first grow a full tree $T_{max}$ to the maximum depth. Such a tree typically overfits the training data. Cost-complexity pruning is then performed by minimizing the penalized empirical risk
 \[
\widehat{\mathcal{R}}\left(\hat{g}\left( T\right)\right) + \alpha \cdot |T|
\]
over all subtrees $T \subset T_{max} $ that can be obtained by collapsing internal nodes of $T_{max}$, where $|T|$ denotes the number of terminal nodes \citep{breiman1984, gey2005model}. This optimization can be efficiently implemented via weakest-link pruning and yields oracle inequalities of the form \eqref{eq:oracal}, with the infimum taken over both the tree depth $K$ and additive functions $g(\cdot) \in \mathcal{G}^1$.

 \subsection{Robustness to Overfitting}
  In the simulation study and empirical application, we further demonstrate that CovRT outperforms CART, with the performance gap particularly substantial on the Boston Housing dataset \citep{harrison1978hedonic}, where the sample size is moderate and the covariate dimensionality is relatively high. Such empirical findings may be intuitively explained by noticing that, at each node t,
 \[
 \widehat{\mathcal{CS}}(j, s, \mathrm{t}) = \widehat{P}_{\mathrm{t}_L}\widehat{P}_{\mathrm{t}_R} \left[ \frac{1}{N_{\mathrm{t}}} \sum_{\mathbf{x}_i \in \mathrm{t}} \left(y_i - \bar{y}_{\mathrm{t}}\right)^2 - \frac{1}{N_{\mathrm{t}}}\left\{\sum_{\mathbf{x}_i \in \mathrm{t}_L} \left(y_i - \bar{y}_{\mathrm{t}_L}\right)^2 + \sum_{\mathbf{x}_i \in \mathrm{t}_R} \left(y_i - \bar{y}_{\mathrm{t}_R}\right)^2\right\}\right],
 \]
  which can be viewed as a penalized version of CART's empirical $L_2$ risk minimization criterion. The factor $\widehat{P}_{\mathrm{t}_L}\widehat{P}_{\mathrm{t}_R}$ in $\widehat{\mathcal{CS}}(j, s, \mathrm{t})$ serves as a natural penalty on highly unbalanced splits. When a split creates one large daughter node and one very small one, this product becomes small, reducing the overall gain and discouraging such divisions. This mechanism prevents the model from overfitting to small, potentially noisy subsets of the data, and instead encourages more balanced and stable splits. As a result, CovRT is more robust to overfitting and produces trees that generalize better to unseen data.
  
  Here is a simple illustration of the above issues, motivated by Example 5 in \cite{ishwaran2015effect}. Consider training data $\{(x_{1i}, y_i)\}_{i = 1}^N$ of size $N = 200$ sampled i.i.d. from the simple linear model:
  \begin{equation}
    y_i = c_0 + c_1x_{1i} + \epsilon_i,\ i = 1, \ldots, N,
    \label{eq:simple_linear}
  \end{equation}
  where $x_{1i}$ is uniformly distributed on $(0, 1]$ and $\epsilon_i$ follows a standard normal distribution. We consider three scenarios: (a) noisy ($c_0 = 1$, $c_1 = 0$); (b) medium signal ($c_0=1$, $c_1=0.5$); and (c) strong signal ($c_0 = 1$, $c_1 = 1$). We fit depth-1 trees on the training data and investigate the distribution of split points $\hat{s}$ generated by CART and CovRT. For comparison, we also include a purely random splitting tree, where the split point is selected purely at random.

  The estimated density of the split points $\hat{s}$ from 5000 simulations is shown in Figure~\ref{fig:densities} for the three tree methods. In the noisy scenario, we see that the density of CART split points concentrates near the edges. This phenomenon, known as the end-cut preference splitting property \citep[Chapter~11.8]{breiman1984}, leads CART to generate highly unbalanced splits where $\widehat{P}_{\mathrm{t}_L}$ is close to zero or one when splitting on noisy covariates, and has generally been considered an undesirable property. In the medium signal scenario, the density of CART split points resembles that of purely random splitting, while in the strong signal scenario, the density concentrates near the midpoint, resulting in more stable and balanced splits. On the other hand, CovRT does not exhibit such end-cut preference and instead tends to produce more balanced and stable splits, with split points increasingly concentrated near the midpoint as the signal strength grows. 
        \begin{figure}[H]
    \centering
    \includegraphics[width = 0.8\textwidth]{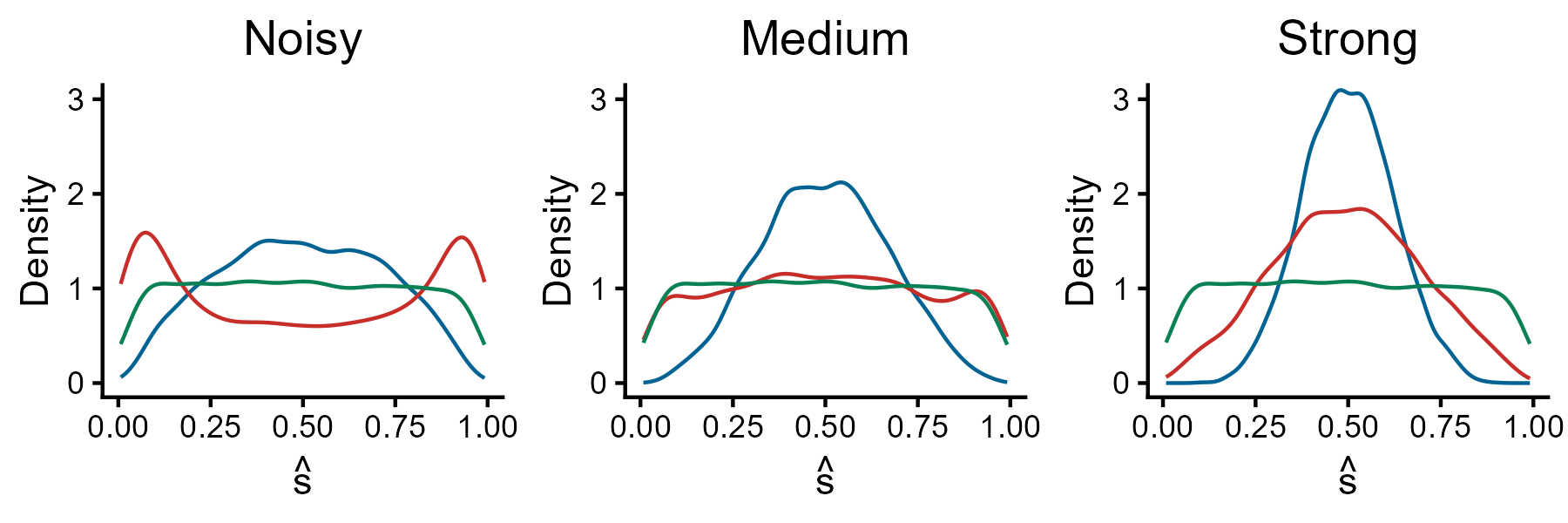}
    \caption{Density for split points $\hat{s}$ for purely random splitting tree (green), CART (red) and CovRT (blue) where $g^* = c_0 + c_1x_{1}$ for $c_0 = 1, c_1 = 0$ (left: noisy), $c_0 = 1, c_1 = 0.5$ (middle: medium signal) and $c_0 = 1, c_1 = 1$ (right: strong signal).}
    \label{fig:densities}
\end{figure}
    
 A more intriguing finding is that incorporating the factor $\widehat{P}_{\mathrm{t}_L}\widehat{P}_{\mathrm{t}_R}$ not only encourages more balanced splits but also helps the tree more effectively identify covariates with true signals. We continue to consider the data-generating process in (\ref{eq:simple_linear}), but now include four additional noisy covariates $(x_2, x_3, x_4, x_5)$, which are independently and uniformly distributed on $(0, 1]^4$ and independent of both $x_1$ and $y$. Figure~\ref{fig:choose} displays the proportion of times that depth-1 trees correctly selected the true signal covariate $x_1$ for splitting in 5000 simulations. For the purely random splitting tree, both the covariate and the split point are chosen entirely at random. We see that CovRT achieves higher accuracy in identifying the true signal covariate than CART across all signal strengths. For example, in the medium signal setting where the signal strength equals 0.5, CovRT achieves an accuracy of 0.643 in correctly splitting on the true signal covariate $x_1$, compared to 0.588 for CART. The difference of 0.055 is statistically significant, with a p-value less than $10^{-8}$ from the one-sided two-proportion Z-test. This improvement can be explained by Figure \ref{fig:densities}: CART exhibits an end-cut preference for noisy covariates, while the inclusion of the factor $\widehat{P}_{\mathrm{t}_L}\widehat{P}_{\mathrm{t}_R}$ in CovRT penalizes splits near edges. Consequently, CovRT discourages splitting on noisy covariates and instead favors covariates with true signal, where the optimal split points tend to lie closer to the midpoint. The higher accuracy of CovRT in identifying true signal compared to CART can translate into a more efficient growth of tree branches, resulting in shallower trees that are less prone to overfitting.

  \begin{figure}[H]
    \centering
    \includegraphics[width = 0.5\textwidth]{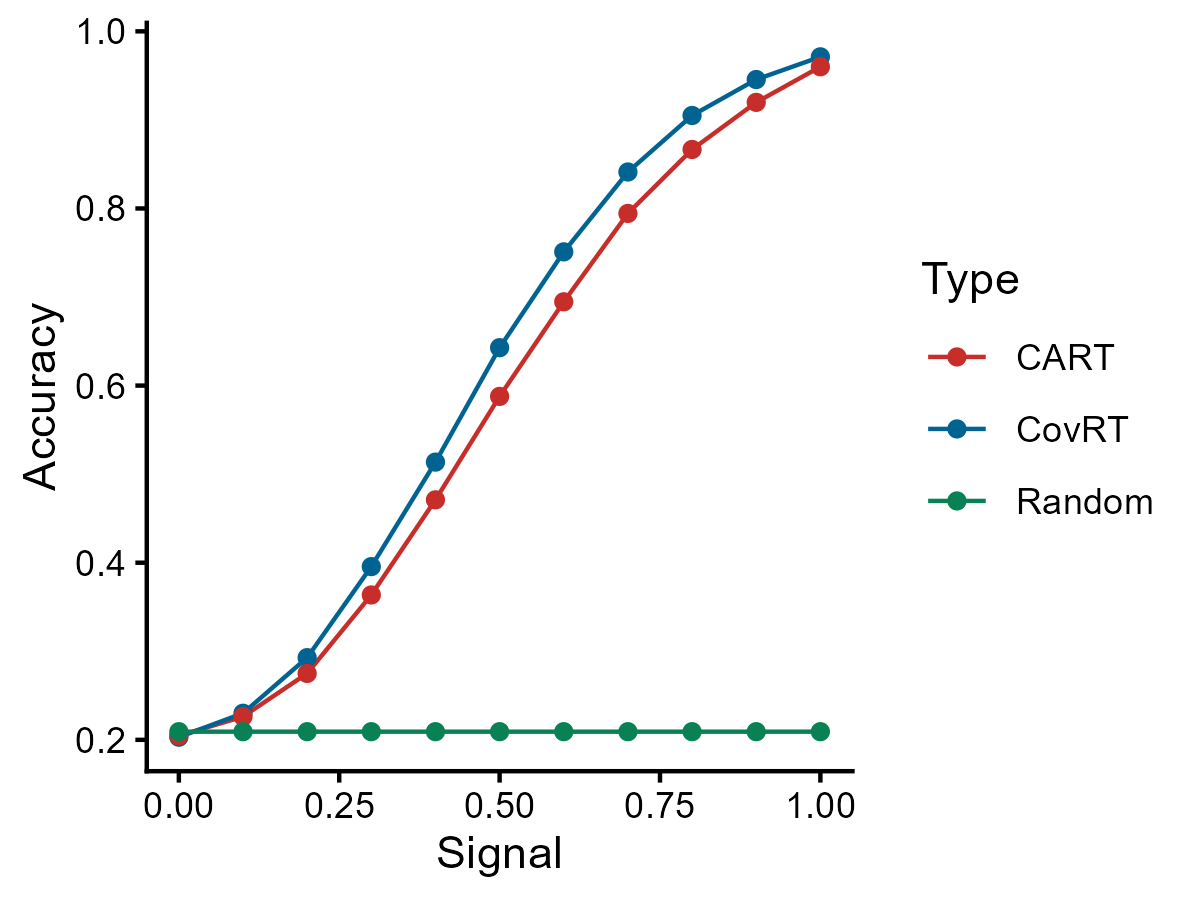}
    \caption{Accuracy for purely random splitting tree (green), CART (red), and CovRT (blue) in correctly splitting on the true signal covariate $x_1$.}
    \label{fig:choose}
\end{figure}


\section{Simulation Study}
In this section, we analyze the finite-sample behavior of CovRT compared with CART. For training data generated from four different data-generating processes, we will fit CovRT with fixed depths, CART with fixed depths, post-pruned CovRT, and post-pruned CART with cost-complexity pruning. 
The prediction accuracy of the three algorithms will be evaluated by their empirical $L_2$ risk on the testing data of size 1000. For algorithms CovRT with fixed depths and CART with fixed depths, results are presented by varying the choice of maximum depth $K$.The post-pruned CovRT and CART are fitted using the optimal complexity parameter $\alpha$, which is selected based on an independent validation set of the same size as the training data. The four different data-generating processes are given as follows:

\textbf{Model 1}:
\[
 y_i = \sum_{j = 1}^{10}\beta_jx_{ij} + \epsilon_i,
\]
where $\vecX_i = (x_{1i}, \ldots, x_{10i}) \sim \text{Unif}((0, 1]^{10})$ and $\epsilon_{i} \sim \mathcal{N}(0, 2^2)$. Let $(\beta_{1}, \beta_{2}, \beta_{3}, \beta_{4}) = (10, 8, 6, 2)$ and $\beta_j = 0$ for all $j \geq 5$. 

\textbf{Model 2}: 
\[
 y_i = \sum_{j = 1}^{10}\beta_jx^2_{ji} + \epsilon_i,
\]
where $\vecX_i = (x_{1i}, \ldots, x_{10i}) \sim \text{Unif}((0, 1]^{10})$ and $\epsilon_{i} \sim \mathcal{N}(0, 2^2)$. Let $(\beta_{1}, \beta_{2}, \beta_{3}, \beta_{4}) = (10, 8, 6, 2)$ and $\beta_j = 0$ for all $j \geq 5$. 

\textbf{Model 3}: 
\[
 y_i = \beta_1x_{1i}  + \beta_2x_{2i} 
  + \beta_3\cdot\ind{x_{3i} > 0.5}+ \beta_4\cdot\ind{x_{4i} > 0.6}+ \epsilon_i,
\]
where $\vecX_i = (x_{1i}, \ldots, x_{10i}) \sim \text{Unif}((0, 1]^{10})$ and $\epsilon_{i} \sim \mathcal{N}(0, 2^2)$. Let $(\beta_{1}, \beta_{2}, \beta_{3}, \beta_{4}) = (6, 10, 8, 4)$. 

\textbf{Model 4}: 
\[
 y_i = \beta_1x_{1i}\cdot\ind{x_{1i} > 0.5}  + \beta_2\sqrt{x_{2i}}
  + \beta_3\sin{(0.5\pi\cdot x_{3i})}+ \beta_4\cos{(\pi\cdot x_{4i})}+ \epsilon_i,
\]
where $\vecX_i = (x_{1i}, \ldots, x_{10i}) \sim \text{Unif}((0, 1]^{10})$ and $\epsilon_{i} \sim \mathcal{N}(0, 2^2)$. Let $(\beta_{1}, \beta_{2}, \beta_{3}, \beta_{4}) = (6, 10, 8, 4)$.

\begin{table}[]
\caption{The prediction risk for CovRT with fixed depth, CART with fixed depth, and post-pruned CART with cost-complexity pruning, evaluated by the empirical $L_2$ risk on the testing data. The bold font indicates the minimal $L_2$ risk}
\centering
\begin{tabular}{llcccc}
\hline
      &         & \multicolumn{4}{c}{Algorithms}       \\ \cline{3-6} 
Model & Depth $K$ & CovRT & CART & Post-pruned CovRT & Post-pruned CART   \\ \hline
1     & 3       & 9.23 & 9.58 &  \multirow{4}{*}{\textbf{8.14}} &  \multirow{4}{*}{8.37} \\
      & 4       & 8.23 & 8.65    &                    \\
      & 5       & 8.31 & 8.55   &                    \\
      & 6       & 8.62 & 8.74    &                    \\
      &         &       &      &                    \\
2     & 3       & 9.19 & 9.40  & \multirow{4}{*}{\textbf{7.99}} & \multirow{4}{*}{8.19}                  \\
      & 4       & 8.01 & 8.39   &                  \\
      & 5       & 8.21 & 8.34   &   
               \\
      & 6       & 8.55 & 8.54   &    \\
      &         &       &      &                    \\
3     & 3       & 5.62  & 5.83 & \multirow{4}{*}{\textbf{5.53} } & \multirow{4}{*}{5.66 }                              \\
      & 4       & 5.62 & 5.84   &   
               \\
      & 5       & 6.28  & 6.31   &    \\ 
    & 6       & 6.72  & 6.70   &    \\ 
      &         &       &      &                    \\
4     & 3       & 14.41  & 14.91 & \multirow{4}{*}{\textbf{10.61} } & \multirow{4}{*}{10.91} \\
& 4       & 11.07 & 11.69 &                  \\
      & 5       & 10.70 & 11.13   & 
      \\
      & 6       & 10.90 & 11.18   &  
               \\
      \hline
\end{tabular}
\label{tab:sim_L2}
\end{table}

We analyze the prediction accuracy of CovRT with fixed depth, CART with fixed depth, post-pruned CovRT, and post-pruned CART, using training data of fixed size 300 generated from the four models described above. Table \ref{tab:sim_L2} presents the empirical $L_2$ risk of the four algorithms evaluated on the testing data. Results are averaged over 500 replications of the data. Across all four data-generating processes, CovRT outperforms CART in prediction accuracy, and post-pruned CovRT achieves the lowest prediction risk among the four methods.

\section{Empirical Applications}

We illustrate the performance of CovRT on three real-world benchmark datasets:

\begin{itemize}
    \item \textit{Boston Housing} considers housing values in the area of Boston, Massachusetts \citep{harrison1978hedonic}. It consists of 506 observations and 14 variables, with the target variable being the median home value $y$ in a neighborhood, and features $\vecX$ including nitrogen oxide concentration, crime rate, average number of rooms, among others.

    \item \textit{Airfoil Self-Noise} consists of data from different-sized NACA 0012 airfoils at various wind tunnel speeds and angles of attack \citep{brooks1989airfoil}. It includes 1,503 observations and 6 variables, with the target variable being the scaled sound pressure $y$, and features $\vecX$ including frequency and angles of attack, among others.

    \item \textit{Abalone} aims to predict the age of abalone from physical measurements \citep{nash1994population}. It contains 4,177 observations and 9 variables, with the target variable being the number of rings $y$ (+1.5 gives the age in years), and features $\vecX$ including sex, length, and height, among others.
\end{itemize}

For each dataset, we randomly partition the data into training, validation, and testing sets in a 2:1:1 ratio. On each training set, we fit four tree models: fixed-depth CovRT, fixed-depth CART, post-pruned CovRT, and post-pruned CART. Prediction performance is evaluated on the testing set using empirical $L_2$ risk and $R^2$. Results are averaged over 100 random partitions and summarized in Table~\ref{tab:reg_tree_performance}.

\begin{table}[]
  \centering
  \begin{threeparttable}
  \caption{The prediction risk and $R^2$ for various tree-based algorithms on the Boston Housing, Airfoil Self-Noise, and Abalone datasets}
  \label{tab:reg_tree_performance}
  \sisetup{table-format=1.3}  
\small
  \begin{tabular}{l *{6}{S}}
    \toprule
    & \multicolumn{2}{c}{\textbf{Boston Housing}} 
    & \multicolumn{2}{c}{\textbf{Airfoil Self-Noise}} 
    & \multicolumn{2}{c}{\textbf{Abalone}} \\
    \cmidrule(lr){2-3} \cmidrule(lr){4-5} \cmidrule(lr){6-7}
    \textbf{Algorithm} 
    & {$L_2$ risk} & {$R^2$} 
    & {$L_2$ risk} & {$R^2$} 
    & {$L_2$ risk} & {$R^2$} \\
    \midrule
    Fixed-depth CovRT      & 21.07 & 0.755 & 11.97 & 0.752 & 5.319 & 0.493 \\
    Fixed-depth CART        & 22.95 & 0.734 & 12.02 & 0.751 & 5.482 & 0.478 \\
    & & & & & &  \\
   Post-pruned CovRT     & 20.95 & 0.756 & 11.98 & 0.752 & 5.412 & 0.484 \\
    Post-pruned CART   & 22.81 & 0.736 & 12.03 & 0.751 & 5.513 & 0.476 \\
    \bottomrule
  \end{tabular}
  \end{threeparttable}
\end{table}

From Table~\ref{tab:reg_tree_performance}, we observe that CovRT exhibits better predictive performance than CART on three datasets, both for fixed-depth trees and post-pruned trees. Notably, on the Boston Housing dataset, CovRT achieves an average prediction risk approximately 8\% lower than that of CART across 100 random data partitions, with improvements of up to about 20\% in some partitions. This substantial improvement may be attributed to the moderate sample size (fewer than 300 training observations) and the relatively high covariate dimensionality. 

\section{Discussion} \label{section:discussion}

In this study, we propose a covariance-driven splitting criterion as an overfitting robust method for regression tree growth. We establish and oracle inequality and the high-dimensional consistency of our proposed CovRT and show that, in theory, its prediction accuracy is comparable to that of CART, with the potential to produce more balanced and stable splits, thereby achieving better generalization. Through simulations and empirical applications, we demonstrate that CovRT has superior predictive performance compared to CART.

Our findings have broader implications in several aspects. First, this paper focuses on regression trees. Extending our covariance-driven splitting criterion to classification trees is left for future research.  Second, our CovRT can serve as a new building block for other algorithms such as random forests \citep{friedman2001greedy}, gradient boosted decision trees \citep{breiman2001random}, among others. Third, our work may be further developed to help infer causal effects in experimental or observational studies \citep{athey2016recursive, wager2018estimation, cattaneo2022pointwise, tabord2023}.


\bibliographystyle{apalike}
\bibliography{Inter}

\begin{flushleft}
{\huge\bfseries Appendix}
\end{flushleft}

\appendix


\section{Additional theoretical results}
For multivariate functions $u(\vecX_i)$ and $v(\vecX_i)$, define the squared norm and inner product over the dataset $\mathcal{D}$ as
\[
\|u\|_{\mathcal{D}}^2:=\frac{1}{N} \sum_{i=1}^N\left(u\left(\mathbf{x}_i\right)\right)^2 \quad \text { and } \quad\langle u, v\rangle_{\mathcal{D}}:=\frac{1}{N} \sum_{i=1}^N u\left(\mathbf{x}_i\right) v\left(\mathbf{x}_i\right).
\]
Similarly, define the within-node squared norm and inner product, respectively, as
\[
\|u\|_{\mathrm{t}}^2:=\frac{1}{N_{\mathrm{t}}} \sum_{\mathbf{x}_i \in \mathrm{t}}\left(u\left(\mathbf{x}_i\right)\right)^2 \quad \text { and } \quad\langle u, v\rangle_{\mathrm{t}}:=\frac{1}{N_{\mathrm{t}}} \sum_{\mathbf{x}_i \in \mathrm{t}} u\left(\mathbf{x}_i\right) v\left(\mathbf{x}_i\right).
\]
For a splitting at node t with covariate $x_j$ and splitting value $s$, we define 
\[
\Phi_{\mathrm{t}}(\mathbf{x}):=\beta_0+\beta_1 \ind{x_j>s} \in \mathcal{G}^1, \quad \beta_0:=\frac{N_{\mathrm{t}_R}}{N_{\mathrm{t}}}, \quad \beta_1:=-1. 
\]
Then we have 
\begin{equation}
\Phi_{\mathrm{t}}(\mathrm{x}) \mathbf{1}(\mathrm{x} \in \mathrm{t})=\frac{\ind{\mathrm{x} \in \mathrm{t}_L} N_{\mathrm{t}_R}-\ind{\mathrm{x} \in \mathrm{t}_R} N_{\mathrm{t}_L}}{N_{\mathrm{t}}} .
\label{eq:stump_func}
\end{equation}

The following result will be used in the proof of Lemma 1.

\begin{proposition}
    Let t be any node in $\widehat T^\text{Cov}$. Then, the feasible splitting criterion for CovRT satisfies
    \[
    \widehat{\mathcal{CS}}(j, s, \mathrm{t})=\left|\left\langle y-\bar{y}_{\mathrm{t}}, \Phi_{\mathrm{t}}\right\rangle_{\mathrm{t}}\right|^2 .
    \]
    \label{prop:inner_product}
\end{proposition}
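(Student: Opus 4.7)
The plan is to prove this by direct computation: substitute the definition of $\Phi_{\mathrm{t}}$ into the inner product, split the sum into contributions from $\mathrm{t}_L$ and $\mathrm{t}_R$, simplify using the definitions of $\bar y_{\mathrm{t}_L}$ and $\bar y_{\mathrm{t}_R}$, and verify that squaring the result reproduces $\widehat{\mathcal{CS}}(j,s,\mathrm{t})$ as defined in~\eqref{eq:heter_gain_feasible}.

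First I would evaluate $\Phi_{\mathrm{t}}(\mathbf{x})$ on each daughter node: by the definitions $\beta_0 = N_{\mathrm{t}_R}/N_{\mathrm{t}} = \widehat P_{\mathrm{t}_R}$ and $\beta_1 = -1$, we have $\Phi_{\mathrm{t}}(\mathbf{x}) = \widehat P_{\mathrm{t}_R}$ when $\mathbf{x}\in \mathrm{t}_L$ and $\Phi_{\mathrm{t}}(\mathbf{x}) = \widehat P_{\mathrm{t}_R} - 1 = -\widehat P_{\mathrm{t}_L}$ when $\mathbf{x}\in \mathrm{t}_R$. This matches the representation in~\eqref{eq:stump_func} and is the key reason centering against $\bar y_{\mathrm{t}}$ works out cleanly.

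Next I would expand the inner product as
\[
\left\langle y-\bar y_{\mathrm{t}},\Phi_{\mathrm{t}}\right\rangle_{\mathrm{t}} = \frac{1}{N_{\mathrm{t}}}\left[\widehat P_{\mathrm{t}_R}\!\!\sum_{\mathbf{x}_i\in \mathrm{t}_L}(y_i-\bar y_{\mathrm{t}}) - \widehat P_{\mathrm{t}_L}\!\!\sum_{\mathbf{x}_i\in \mathrm{t}_R}(y_i-\bar y_{\mathrm{t}})\right].
\]
Using $\sum_{\mathbf{x}_i\in \mathrm{t}_L} y_i = N_{\mathrm{t}_L}\bar y_{\mathrm{t}_L}$ and the analogous identity on $\mathrm{t}_R$, each inner sum collapses to $N_{\mathrm{t}_L}(\bar y_{\mathrm{t}_L}-\bar y_{\mathrm{t}})$ and $N_{\mathrm{t}_R}(\bar y_{\mathrm{t}_R}-\bar y_{\mathrm{t}})$ respectively. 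After substituting $\widehat P_{\mathrm{t}_L} = N_{\mathrm{t}_L}/N_{\mathrm{t}}$ and $\widehat P_{\mathrm{t}_R}=N_{\mathrm{t}_R}/N_{\mathrm{t}}$ and cancelling, the expression simplifies to
\[
\left\langle y-\bar y_{\mathrm{t}},\Phi_{\mathrm{t}}\right\rangle_{\mathrm{t}} = \widehat P_{\mathrm{t}_L}\widehat P_{\mathrm{t}_R}\bigl[(\bar y_{\mathrm{t}_L}-\bar y_{\mathrm{t}})-(\bar y_{\mathrm{t}_R}-\bar y_{\mathrm{t}})\bigr]=\widehat P_{\mathrm{t}_L}\widehat P_{\mathrm{t}_R}(\bar y_{\mathrm{t}_L}-\bar y_{\mathrm{t}_R}).
\]
Squaring both sides yields exactly $\widehat P_{\mathrm{t}_L}^2\widehat P_{\mathrm{t}_R}^2(\bar y_{\mathrm{t}_L}-\bar y_{\mathrm{t}_R})^2 = \widehat{\mathcal{CS}}(j,s,\mathrm{t})$, completing the proof.

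There is no serious obstacle here: the argument is a one-line algebraic identity once the piecewise form of $\Phi_{\mathrm{t}}$ is made explicit. The only bookkeeping point is the cancellation of the $\bar y_{\mathrm{t}}$ terms, which occurs because the coefficients $(\widehat P_{\mathrm{t}_R},-\widehat P_{\mathrm{t}_L})$ on the two subsums are precisely chosen so that $\Phi_{\mathrm{t}}$ has mean zero within $\mathrm{t}$ under the empirical measure; alternatively, one can note that the $\bar y_{\mathrm{t}}$ terms subtract out because $\sum_{\mathbf{x}_i\in \mathrm{t}}\Phi_{\mathrm{t}}(\mathbf{x}_i)=0$, which provides a cleaner way to see why centering by $\bar y_{\mathrm{t}}$ does not alter the inner product.
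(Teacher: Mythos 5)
Your proof is correct and follows essentially the same direct computation as the paper's: evaluate $\Phi_{\mathrm{t}}$ piecewise on $\mathrm{t}_L$ and $\mathrm{t}_R$, collapse the two subsums to $N_{\mathrm{t}_L}(\bar y_{\mathrm{t}_L}-\bar y_{\mathrm{t}})$ and $N_{\mathrm{t}_R}(\bar y_{\mathrm{t}_R}-\bar y_{\mathrm{t}})$, and observe the cancellation of the $\bar y_{\mathrm{t}}$ terms. Your write-up is in fact slightly cleaner (the paper's intermediate display contains a harmless typo, writing $\bar y_{\mathrm{t}_R}-\bar y_{\mathrm{t}_L}$ where $\bar y_{\mathrm{t}_R}-\bar y_{\mathrm{t}}$ is meant), and your closing remark that $\sum_{\mathbf{x}_i\in\mathrm{t}}\Phi_{\mathrm{t}}(\mathbf{x}_i)=0$ is a nice way to see why the centering is immaterial.
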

\begin{proof}
    \[
    \begin{aligned}
    \left|\left\langle y-\bar{y}_{\mathrm{t}}, \Phi_{\mathrm{t}}\right\rangle_{\mathrm{t}}\right|^2 &=\left| \frac{1}{N_{\mathrm{t}}} \sum_{\mathbf{x}_i \in \mathrm{t}} (y_i - \bar{y}_{\mathrm{t}}) (\beta_0+\beta_1 \ind{x_{ij}>s})\right|^2\\
    &=
    \left|\frac{N_{\mathrm{t}_L} \left(\bar{y}_{\mathrm{t}_L} - \bar{y}_{\mathrm{t}}\right) N_{\mathrm{t}_R}-N_{\mathrm{t}_R}\left( \bar{y}_{\mathrm{t}_R} - \bar{y}_{\mathrm{t}_L}\right) N_{\mathrm{t}_L}}{N_{\mathrm{t}}^2 }\right|^2=\frac{N_{\mathrm{t}_L}^2 N_{\mathrm{t}_R}^2}{N_{\mathrm{t}}^4}\left(\bar{y}_{\mathrm{t}_L}-\bar{y}_{\mathrm{t}_R}\right)^2=\widehat{\mathcal{CS}}(j, s, \mathrm{t}).
    \end{aligned}
    \]
\end{proof}

\section{Proof of the main results}

        \subsection{Proof of Theorem 1}
\begin{proof}
     By assumptions, we assume an additive regression model $\mathbb{E}(y \mid \vecX) = g^*(\vecX) = g_1\left(x_1\right)+g_2\left(x_2\right)+\cdots+g_p\left(x_p\right)$ and that the covariates are independent $\vecX \sim \prod_{j=1}^pF_{\vecX_j}$. 
    Then, at each node $\mathrm{t} = [a_1, b_1] \times [a_2, b_2] \times \cdots \times [a_p, b_p]$, with $s \in [a_j, b_j]$, $\mathrm{t}_L = \{\vecX \in \mathrm{t} : x_j \leq s\}$ and $\mathrm{t}_R = \{\vecX \in \mathrm{t} : x_j > s\}$,
\begin{equation}
\begin{aligned}
\mathbb{E}(y\mid \vecX \in \mathrm{t}_L) - \mathbb{E}(y\mid \vecX \in \mathrm{t}_R) 
&= \mathbb{E}(y\mid x_j \leq s, \vecX \in \mathrm{t}) - \mathbb{E}(y\mid x_j > s, \vecX \in \mathrm{t})\\
&= \mathbb{E}\{g_j\left(x_j\right)\mid x_j \leq s, \vecX \in \mathrm{t}\} - \mathbb{E}\{g_j\left(x_j\right)\mid x_j > s, \vecX \in \mathrm{t}\}\\
&= \mathbb{E}\{g_j\left(x_j\right)\mid a_j \leq x_j \leq s\} - \mathbb{E}\{g_j\left(x_j\right)\mid s < x_j \leq b_j\}\\
&= \frac{\mathbb{E}\{g_j\left(x_j\right)\ind{a_j \leq x_j \leq s}\}}{F_{\vecX_j}((a_j,s])} - \frac{\mathbb{E}\{g_j\left(x_j\right)\ind{ s < x_j \leq b_j}\}}{F_{\vecX_j}((s,b_j])} \\
&= \frac{\int_{a_j}^sg_j\left(x\right)dF_{\vecX_j}}{F_{\vecX_j}((a_j,s])} - \frac{\int_{s}^{b_j}g_j\left(x\right)dF_{\vecX_j}}{F_{\vecX_j}((s,b_j])}. \\
\end{aligned}
\end{equation}

So 
\begin{equation}
\begin{aligned}
\mathcal{CS}(j, s, \mathrm{t})
&= P_{\mathrm{t}_L}^2P_{\mathrm{t}_R}^2\left\{\mathbb{E}(y\mid \vecX \in \mathrm{t}_L) - \mathbb{E}(y\mid \vecX \in \mathrm{t}_R)\right\}^2 \\
&= \left\{\frac{F_{\vecX_j}((a_j,s])}{F_{\vecX_j}((a_j,b_j])}\right\}^2\left\{\frac{F_{\vecX_j}((s, b_j])}{F_{\vecX_j}((a_j,b_j])}\right\}^2 \left\{\frac{\int_{a_j}^sg_j\left(x\right)dF_{\vecX_j}}{F_{\vecX_j}((a_j,s])} - \frac{\int_{s}^{b_j}g_j\left(x\right)dF_{\vecX_j}}{F_{\vecX_j}((s, b_j])}\right\}^2 \\
&= \frac{1}{\{F_{\vecX_j}((a_j,b_j])\}^4}\left\{F_{\vecX_j}((s,b_j]){\int_{a_j}^sg_j\left(x\right)dF_{\vecX_j}} - F_{\vecX_j}((a_j,s]){\int_{s}^{b_j}g_j\left(x\right)dF_{\vecX_j}}\right\}^2. \\
\end{aligned}
\end{equation}

Let $\delta = \frac{F_{\vecX_j}((a_j,s])}{F_{\vecX_j}((a_j,b_j])}$, then $\delta \in [0, 1]$ and
\begin{equation}
\begin{aligned}
\mathcal{CS}(j, s, \mathrm{t})
&= \mathcal{CS}(j, s(\delta), \mathrm{t}) \\
&= \frac{1}{\{F_{\vecX_j}((a_j,b_j])\}^2}\left\{(1 - \delta){\int_{a_j}^sg_j\left(x\right)dF_{\vecX_j}} - \delta {\int_{s}^{b_j}g_j\left(x\right)dF_{\vecX_j}}\right\}^2 \\
&= \frac{1}{\{F_{\vecX_j}((a_j,b_j])\}^2}\left\{{\int_{a_j}^sg_j\left(x\right)dF_{\vecX_j}} - \delta {\int_{a_j}^{b_j}g_j\left(x\right)dF_{\vecX_j}}\right\}^2 \\
&= \frac{1}{\{F_{\vecX_j}((a_j,b_j])\}^2}\left[{\int_{a_j}^{s}\{g_j\left(x\right) - I_j\}dF_{\vecX_j}}\right]^2, \\
\end{aligned}
\label{eq:thm1_HT}
\end{equation}
where $I_j =  \{F_{\vecX_j}((a_j,b_j])\}^{-1}{\int_{a_j}^{b_j}g_j\left(x\right)dF_{\vecX_j}}$.

Note that for $x \in [a_j, b_j]$,
\begin{equation}
\begin{aligned}
g_j\left(x\right) 
&= \mathbb{E}\{g^*(\vecX) \mid x_j = x, \vecX \in \mathrm{t}\}-  \sum_{i \neq j}\mathbb{E}\{g_i(x_i) \mid \vecX \in \mathrm{t}\} \\
&=\mathbb{E}\{g^*(\vecX) \mid x_j = x, \vecX \in \mathrm{t}\} - \mathbb{E}\{g^*(\vecX)\mid \vecX \in \mathrm{t}\} + \mathbb{E}\{g_j(x_j)\mid \vecX \in \mathrm{t}\} \\
&= \mathbb{E}(y \mid x_j = x, \vecX \in \mathrm{t}) - \mathbb{E}(y\mid \vecX \in \mathrm{t}) + I_j.\\
\label{eq:thm1_gj}
\end{aligned}
\end{equation}

Substitute (\ref{eq:thm1_gj}) into (\ref{eq:thm1_HT}) and get
\begin{equation}
\begin{aligned}
\mathcal{CS}(j, s, \mathrm{t})
&=\left[ \frac{1}{F_{\vecX_j}((a_j,b_j])}{\int_{a_j}^{s}\{\mathbb{E}(y \mid x_j = x, \vecX \in \mathrm{t}) - \mathbb{E}(y\mid \vecX \in \mathrm{t})\}dF_{\vecX_j}} \right]^2 \\
&=\left[ {\int_{a_j}^{s}\{\mathbb{E}(y \mid x_j = x, \vecX \in \mathrm{t}) - \mathbb{E}(y\mid \vecX \in \mathrm{t})\}dF_{\vecX_j\mid \mathrm{t}}} \right]^2 \\
&=\frac{1}4\Bigg[ {\int_{a_j}^{s}\{\mathbb{E}(y \mid x_j = x, \vecX \in \mathrm{t}) - \mathbb{E}(y\mid \vecX \in \mathrm{t})\}dF_{\vecX_j\mid \mathrm{t}}} \\
&\hspace{125pt}-{\int_{s}^{b_j}\{\mathbb{E}(y \mid x_j = x, \vecX \in \mathrm{t}) - \mathbb{E}(y\mid \vecX \in \mathrm{t})\}dF_{\vecX_j\mid \mathrm{t}}} \Bigg]^2. \\
\end{aligned}
\end{equation}
The last equality follows from the fact that
\[
\int_{a_j}^{b_j}
\left\{
\mathbb{E}(y \mid x_j = x, \mathbf{x} \in \mathrm{t})
-
\mathbb{E}(y \mid \mathbf{x} \in \mathrm{t})
\right\}
\, dF_{x_j \mid \mathrm{t}}
= 0,
\]
since $\mathbb{E}(y \mid \mathbf{x} \in \mathrm{t})$ is the conditional mean of $y$ over the node $\mathrm{t}$.

When $g^*(\vecX) = \alpha + \sum_{j=1}^p\beta_jx_j$,

\begin{equation}
\begin{aligned}
\mathcal{CS}(j, s, \mathrm{t})
&= \frac{1}{(b_j - a_j)^2}\left[{\int_{a_j}^{s}\{g_j\left(x\right) - I_j\}dx}\right]^2 \\
&= \frac{1}{(b_j - a_j)^2}\left[{\int_{a_j}^{s}\beta_j\left\{x - \frac{a_j+b_j}{2}\right\}dx}\right]^2. \\
\end{aligned}
\end{equation}

Then, 
\begin{equation}
\begin{aligned}
\max_{s}\mathcal{CS}(j, s, \mathrm{t}) &=  \mathcal{CS}(j, {(a_j+b_j)}/{2}, \mathrm{t}) \\
&= \frac{1}{(b_j - a_j)^2}\left[{\beta_j\int_{\frac{a_j-b_j}{2}}^{0}xdx}\right]^2 \\
&= \frac{1}{64}(b_j - a_j)^2\beta_j^2.\\
\end{aligned}
\end{equation}

\end{proof}
\subsection{Proof of Theorem 2}
\begin{proof}
First notice that for each $j = 1, \cdots, p$ and $a_j < s < b_j$, $N_{\mathrm{t}_L} / N = (1/N)\sum_{i=1}^N\ind{\mathbf{x}_i \in \mathrm{t}_L} \inas \Prob{\mathbf{x}_i \in \mathrm{t}_L}$ and that $N_{\mathrm{t}} / N = (1/N)\sum_{i=1}^N\ind{\mathbf{x}_i \in \mathrm{t}} \inas \Prob{\mathbf{x}_i \in \mathrm{t}}$ by the strong law of large numbers. Then, 
\begin{equation}
\widehat P_{\mathrm{t}_L} =  \frac{N_{\mathrm{t}_L} / N}{N_{\mathrm{t}} / N} \inas \Prob{\mathbf{x}_i \in \mathrm{t}_L} / \Prob{\mathbf{x}_i \in \mathrm{t}} = P_{\mathrm{t}_L}.
\label{eq:PtL}
\end{equation}
Moreover, 
\begin{equation}
\bar{y}_{\mathrm{t}_L} =\frac{N}{N_{\mathrm{\mathrm{t}_L}}}\frac{1}{N} \sum_{i = 1}^N y_i\ind{\mathbf{x}_i \in \mathrm{\mathrm{t}_L}} \inas \frac{1}{\Prob{\mathbf{x}_i \in \mathrm{t}_L}} \mathbb{E}\{y_i\ind{\mathbf{x}_i \in \mathrm{\mathrm{t}_L}}  \} =  \mathbb{E}(y \mid \vecX \in \mathrm{\mathrm{t}_L} )
\label{eq:ytL}
\end{equation}
A symmetrical argument gives
\begin{equation}
\widehat P_{\mathrm{t}_R} \inas P_{\mathrm{t}_R}.
\label{eq:PtR}
\end{equation}
and
\begin{equation}
\bar{y}_{\mathrm{t}_R} \inas \mathbb{E}(y \mid \vecX \in \mathrm{\mathrm{t}_R} ).
\label{eq:ytR}
\end{equation}
Combining (\ref{eq:PtL}), (\ref{eq:ytL}), (\ref{eq:PtR}), and (\ref{eq:ytR}), we obtain
\begin{equation}
\begin{aligned}
    \widehat{\mathcal{CS}}(j, s, \mathrm{t}) &= \widehat{P}_{\mathrm{t}_L}^2\widehat{P}_{\mathrm{t}_R}^2(\bar{y}_{t_L} - \bar{y}_{t_R})^2 \\ &\inas P_{\mathrm{t}_L}^2P_{\mathrm{t}_R}^2\left\{\mathbb{E}(y\mid \vecX \in \mathrm{t}_L) - \mathbb{E}(y\mid \vecX \in \mathrm{t}_R)\right\}^2  = {\mathcal{CS}}(j, s, \mathrm{t}),
    \end{aligned}
    \label{eq:convergence_split_criterion}
\end{equation}
pointwise at each $s \in (a_j, b_j)$.
Furthermore, using the condition that $\mathbb{E}(y^2) < \infty$ and that the function class $\{\ind{x_j \leq s} : a_j < s < b_j\}$ is a VC-subgraph class with VC dimension 1, it can be shown that the convergence in (\ref{eq:convergence_split_criterion}) is uniform on compact sets $\in [a^\prime, b^\prime] \subset (a_j, b_j)$ by the uniform law of large number. Hence, 
       \[
        \hat s_j \inp s_j^* = 
        \underset{a_j < s \leq b_j}{\operatorname{argmax}}{\ \mathcal{CS}}(j, s, \mathrm{t})
   \]
by Theorem 2.7 of \cite{kim1990cube} since for each $j$, $s_j^*$ is the unique global maximum of ${\mathcal{CS}}(j, s, \mathrm{t})$ and $\hat{s}_j$ is bounded.
\end{proof}

\subsection{Proof of Lemma 1}
\begin{proof}
Define an empirical measure $\Pi(j, s)$ on the covariates $x_j$ and split points $s$, having density
\begin{equation}
    \frac{d \Pi(j, s)}{d(j, s)}:=\frac{\left|D g_j(s)\right|}{\sum_{j^{\prime}=1}^p \int\left|D g_{j^{\prime}}\left(s^{\prime}\right)\right| d s^{\prime}},
    \label{eq:density}
\end{equation}
where $D g_j(\cdot)$ denotes the divided difference of $g_j(\cdot)$ for the successive ordered data points along the $j^{\textbf{th}}$ direction within t. Specifically, if $ x_{(1)j} \leq x_{(2)j} \leq \cdots \leq x_{(N_t)j}$ denotes the ordered data points along the $j^{\textbf{th}}$ direction within t, then 
\[
    D g_j(s):=\left\{\begin{array}{ll}
\frac{g_j\left(x_{(i+1) j}\right)-g_j\left(x_{(i) j}\right)}{x_{(i+1) j}-x_{(i) j}} & \text { if } x_{(i) j} \leq s<x_{(i+1) j} \\
0 & \text { if } s=x_{(i) j}=x_{(i+1) j}, s<x_{(1) j}, \text { or } s>x_{\left(N_{\mathrm{t}}\right) j}
\end{array} .\right.
\]

By the definition of  $\widehat{\mathcal{CS}}(\mathrm{t})=\max _{(j, s)} \widehat{\mathcal{CS}}(j, s, \mathrm{t})$,
\begin{equation}
\widehat{\mathcal{CS}}(\mathrm{t}) \geq \int \widehat{\mathcal{CS}}(j, s, \mathrm{t}) d \Pi(j, s)=\int\left|\left\langle y-\bar{y}_{\mathrm{t}}, \Psi_{\mathrm{t}}\right\rangle_{\mathrm{t}}\right|^2 d \Pi(j, s),
\label{eq:cs_lower_bound}
\end{equation}
where the last equality follows from Proposition \ref{prop:inner_product}. 

By the Jensen inequality,
\begin{equation}
    \int\left|\left\langle y-\bar{y}_{\mathrm{t}}, \Psi_{\mathrm{t}}\right\rangle_{\mathrm{t}}\right|^2 d \Pi(j, s) \geq \left(\int\left|\left\langle y-\bar{y}_{\mathrm{t}}, \Psi_{\mathrm{t}}\right\rangle_{\mathrm{t}}\right| d \Pi(j, s)\right)^2.
    \label{eq:inner_Jensen}
\end{equation}
It follows from (\ref{eq:stump_func}) that 

\[
\Phi_{\mathrm{t}}(\vecX) \ind{\vecX \in \mathrm{t}}=\ind{\vecX \in \mathrm{t}_L} \widehat{P}_{\mathrm{t}_R}-\ind{\vecX \in \mathrm{t}_R} \widehat{P}_{\mathrm{t}_L} = -\left(\ind{x_j>s}-\widehat{P}_{\mathrm{t}_R}\right) \ind{\vecX \in \mathrm{t}},
\]
and that $\left\langle y-\bar{y}_{\mathrm{t}}, \Phi_{\mathrm{t}}\right\rangle_{\mathrm{t}}=-\left\langle y-\bar{y}_{\mathrm{t}}, \ind{x_j>s}\right\rangle_{\mathrm{t}}$.
Thus, we have 
\begin{equation}
\int\left|\left\langle y-\bar{y}_{\mathrm{t}}, \Psi_{\mathrm{t}}\right\rangle_{\mathrm{t}}\right| d \Pi(j, s)=\frac{\sum_{j=1}^p \int\left|D g_j(s)\right|\left|\left\langle y-\bar{y}_{\mathrm{t}}, \ind{x_j>s}\right\rangle_{\mathrm{t}}\right| d s}{\sum_{j^{\prime}=1}^p \int\left|D g_{j^{\prime}}\left(s^{\prime}\right)\right| d s^{\prime}} .
\label{eq:main_part}
\end{equation}
For the denominator of (\ref{eq:main_part}), 
\begin{equation}
    \begin{aligned}
\sum_{j^{\prime}=1}^p\int\left|D g_{j^{\prime}}\left(s^{\prime}\right)\right|  d s^{\prime} & =\sum_{j^{\prime}=1}^p\sum_{i=1}^{N_{\mathrm{t}}-1} \int_{x_{(i) j^{\prime}}}^{x_{(i+1) j^{\prime}}}\left|D g_{j^{\prime}}\left(s^{\prime}\right)\right| d s^{\prime} \\
& =\sum_{j^{\prime}=1}^p\sum_{i=1}^{N_{\mathrm{t}}-1}\left|g_{j^{\prime}}\left(x_{(i+1) j^{\prime}}\right)-g_{j^{\prime}}\left(x_{(i) j^{\prime}}\right)\right|\\
& \leq \sum_{j^{\prime}=1}^p\operatorname{TV}\left(g_{j^{\prime}}\right) = \|g\|_{\mathrm{TV}}. 
\end{aligned}
\label{eq:denominator_of_main}
\end{equation}

For the numerator of (\ref{eq:main_part}), 
\begin{equation}
\begin{aligned}
\sum_{j=1}^p \int\left|D g_j(s)\right|\left|\left\langle y-\bar{y}_{\mathrm{t}}, \ind{x_j>s}\right\rangle_{\mathrm{t}}\right| d s &\geq\left|\sum_{j=1}^p \int D g_j(s)\left\langle y-\bar{y}_{\mathrm{t}}, \ind{x_j>s}\right\rangle_{\mathrm{t}} d s\right| \\
& =\left|\left\langle y-\bar{y}_{\mathrm{t}}, \sum_{j=1}^p \int D g_j(s) \mathbf{1}\left(x_j>s\right) d s\right\rangle_{\mathrm{t}}\right| \\
& =\left|\left\langle y-\bar{y}_{\mathrm{t}}, \sum_{j=1}^p g_j\right\rangle_{\mathrm{t}}\right| \\
& =\left|\left\langle y-\bar{y}_{\mathrm{t}}, g\right\rangle_{\mathrm{t}}\right|.
\end{aligned}
\label{eq:numerator_of_main}
\end{equation}
Moreover, 
\begin{equation}
\begin{aligned}
\left\langle y-\bar{y}_{\mathrm{t}}, g\right\rangle_{\mathrm{t}} &=\left\langle y-\bar{y}_{\mathrm{t}}, y\right\rangle_{\mathrm{t}}-\left\langle y-\bar{y}_{\mathrm{t}}, y - g\right\rangle_{\mathrm{t}} \\
&\geq\left\|y-\bar{y}_{\mathrm{t}}\right\|_{\mathrm{t}}^2-\left\|y-\bar{y}_{\mathrm{t}}\right\|_{\mathrm{t}}\|y-g\|_{\mathrm{t}} \\
& \geq \frac{\left\|y-\bar{y}_{\mathrm{t}}\right\|_{\mathrm{t}}^2 - \|y-g\|_{\mathrm{t}}^2}{2}.
\end{aligned}
\label{eq:acc1}
\end{equation}
where the second to last inequality follows from the Cauchy-Schwarz inequality and the last is due to AM–GM inequality.

Combining (\ref{eq:cs_lower_bound}), (\ref{eq:inner_Jensen}), (\ref{eq:main_part}), (\ref{eq:denominator_of_main}), (\ref{eq:numerator_of_main}), (\ref{eq:acc1}), and using the assumption that $\left\|y-\bar{y}_{\mathrm{t}}\right\|_{\mathrm{t}}^2 - \|y-g\|_{\mathrm{t}}^2 \geq 0$, we obtain
\[
    \widehat{\mathcal{CS}}(\mathrm{t}) \geq \frac{\left(\left\|y-\bar{y}_{\mathrm{t}}\right\|_{\mathrm{t}}^2-\|y-g\|_{\mathrm{t}}^2\right)^2}{4\|g\|_{\mathrm{TV}}^2}=\frac{\left\{\widehat{\mathcal{R}}_{\mathrm{t}}\left(\widehat T^\text{Cov}_{K-1}\right)-\widehat{\mathcal{R}}_{\mathrm{t}}(g)\right\}^2}{4\|g\|_{\mathrm{TV}}^2}.
\]
\end{proof}

        \subsection{Proof of Theorem 3}
\begin{proof}
    Notice that for any covariate $x_j$ split point $s$ at any node t, the impurity gain
    \[
        \mathcal{I} \mathcal{G}(j, s, \mathrm{t})  = \widehat{P}_{\mathrm{t}_L}\widehat{P}_{\mathrm{t}_R}(\bar{y}_{\mathrm{t}_L} - \bar{y}_{\mathrm{t}_R})^2 = \frac{1}{\widehat{P}_{\mathrm{t}_L}\widehat{P}_{\mathrm{t}_R}} \widehat{\mathcal{CS}}(j, s, \mathrm{t}) \geq 4\widehat{\mathcal{CS}}(j, s, \mathrm{t}),
    \]
    where the first equality follows from \cite[Section 9.3]{breiman1984}.
    Hence, we have
    \begin{equation}
    \mathcal{I} \mathcal{G}(\mathrm{t}) =\max _{(j, s)} \mathcal{I} \mathcal{G}(j, s, \mathrm{t}) \geq 4\max _{(j, s)}\widehat{\mathcal{CS}}(j, s, \mathrm{t}) = 4\widehat{\mathcal{CS}}(\mathrm{t}) \geq \frac{\left\{\widehat{\mathcal{R}}_{\mathrm{t}}\left(\widehat T^\text{Cov}_{K-1}\right)-\widehat{\mathcal{R}}_{\mathrm{t}}(g)\right\}^2}{\|g\|_{\mathrm{TV}}^2},
    \label{eq:impurityh_gain}
\end{equation}
where the last inequality comes from Lemma 1.
The impurity gain inequality in Equation~(\ref{eq:impurityh_gain}) for CovRT is identical to the one established for CART in Lemma 4.1 of \cite{klusowski2024}. However, since CovRT does not aim to maximize impurity gain at each split, the following equation, which holds for CART,
\[
\widehat{\mathcal{R}}\left(\hat{g}\left(\widehat T^\text{CART}_{K}\right)\right)=\widehat{\mathcal{R}}\left(\hat{g}\left(\widehat T^\text{CART}_{K-1}\right)\right)-\sum_{\mathrm{t} \in T_{K-1}} \frac{N_{\mathrm{t}}}{N} \mathcal{I} \mathcal{G}(\mathrm{t})
\]
deos not necessarily hold for CovRT. Consequently, Lemma D.1 in the Appendix of \cite{klusowski2024} is not directly applicable for our proof.

Instead, we first study the empirical risks for CovRT with depths $K-1$ and $K$. For the tree with depth $K - 1$, we have
\begin{equation}
    \begin{aligned}\widehat{\mathcal{R}}\left(\hat{g}\left(\widehat T^\text{Cov}_{K-1}\right)\right) &= \frac{1}{N} \sum_{i=1}^N \left\{y_i - \hat{g}\left(\widehat T^\text{Cov}_{K-1}\right)\left(\mathbf{x}_i\right)\right\}^2 \\
    &= \sum_{\mathrm{t} \in \widehat T^\text{Cov}_{K-1}} \frac{N_{\mathrm{t}}}{N}\left\{\frac{1}{N_{\mathrm{t}}}\sum_{\vecX_i \in \mathrm{t}}\left(y_i - \bar{y}_{\mathrm{t}}\right)^2\right\},
    \end{aligned}
    \label{eq:empirical_risk_K-1}
\end{equation}
where the sum is over all terminal nodes of $\widehat T^\text{Cov}_{K-1}$. For the tree with depth $K$, we have
\begin{equation}
    \begin{aligned}\widehat{\mathcal{R}}\left(\hat{g}\left(\widehat T^\text{Cov}_{K}\right)\right) &= \frac{1}{N} \sum_{i=1}^N \left\{y_i - \hat{g}\left(\widehat T^\text{Cov}_{K}\right)\left(\mathbf{x}_i\right)\right\}^2 \\
    &= \sum_{\mathrm{t} \in \widehat T^\text{Cov}_{K - 1}}\frac{N_{\mathrm{t}}}{N}\left\{\frac{1}{N_{\mathrm{t}}}\sum_{\vecX_i \in \mathrm{t}_L }\left(y_i - \bar{y}_{\mathrm{t}_L}\right)^2 + \frac{1}{N_{\mathrm{t}}}\sum_{\vecX_i \in \mathrm{t}_R}\left(y_i - \bar{y}_{\mathrm{t}_R}\right)^2\right\} ,
    \end{aligned}
    \label{eq:empirical_risk_K}
\end{equation}
where the sum is over all terminal nodes of $\widehat T^\text{Cov}_{K-1}$ and $\mathrm{t}_L$ and $\mathrm{t}_R$ are the left and right daughter nodes generated from the terminal node $\mathrm{t} \in \widehat T^\text{Cov}_{K - 1}$. Subtracting (\ref{eq:empirical_risk_K}) from (\ref{eq:empirical_risk_K-1}) gives
\begin{equation}
    \begin{aligned}
    \widehat{\mathcal{R}}\left(\hat{g}\left(\widehat T^\text{Cov}_{K-1}\right)\right) - \widehat{\mathcal{R}}\left(\hat{g}\left(\widehat T^\text{Cov}_{K}\right)\right) &= \sum_{\mathrm{t} \in \widehat T^\text{Cov}_{K - 1}}\frac{N_{\mathrm{t}}}{N} \frac{1}{\widehat{P}_{\mathrm{t}_L}\widehat{P}_{\mathrm{t}_R}} \widehat{\mathcal{CS}}(\mathrm{t}) \\
    &\geq 4\sum_{\mathrm{t} \in \widehat T^\text{Cov}_{K - 1}}\frac{N_{\mathrm{t}}}{N}  \widehat{\mathcal{CS}}(\mathrm{t}),
    \end{aligned}
    \label{eq:empirical_risk_diff}
\end{equation}
where the first equality follows from that 
\[
 \widehat{\mathcal{CS}}(j, s, \mathrm{t}) = \widehat{P}_{\mathrm{t}_L}\widehat{P}_{\mathrm{t}_R} \left[ \frac{1}{N_{\mathrm{t}}} \sum_{\mathbf{x}_i \in \mathrm{t}} \left(y_i - \bar{y}_{\mathrm{t}}\right)^2 - \frac{1}{N_{\mathrm{t}}}\left\{\sum_{\mathbf{x}_i \in \mathrm{t}_L} \left(y_i - \bar{y}_{\mathrm{t}_L}\right)^2 + \sum_{\mathbf{x}_i \in \mathrm{t}_R} \left(y_i - \bar{y}_{\mathrm{t}_R}\right)^2\right\}\right],
\]
and that CovRT maximizes $\widehat{\mathcal{CS}}(j, s, \mathrm{t})$ at each split.

We now begin showing for any candidate model $g \in \mathcal{G}^1$ and any depth $K \geq 1$, 
\begin{equation}
\widehat{\mathcal{R}}\left(\hat{g}\left(\widehat T^\text{Cov}_{K}\right)\right) \leq \widehat{\mathcal{R}}(g)+\frac{\|g\|_{\mathrm{TV}}^2}{K+3}.
\label{eq:excess_risk_inequality}
\end{equation}
To this end, we define \[\mathcal{E}_K:=\widehat{\mathcal{R}}\left(\hat{g}\left(\widehat T^\text{Cov}_{K}\right)\right)-\widehat{\mathcal{R}}(g), \quad \mathcal{E}_K(\mathrm{t}):=\widehat{\mathcal{R}}_{\mathrm{t}}\left(\hat{g}\left(\widehat T^\text{Cov}_{K}\right)\right)-\widehat{\mathcal{R}}_{\mathrm{t}}(g)
    \]
as the global and within-node excess empirical risks, respectively. It follows from (\ref{eq:empirical_risk_diff}) that the empirical risk decreases as tree depth increases, i.e., $\mathcal{E}_1 \geq \mathcal{E}_2 \geq \cdots \geq \mathcal{E}_K$. When $\mathcal{E}_{K-1} < 0$, it immediately follows that $\mathcal{E}_{K} \leq \mathcal{E}_{K-1} < 0$, and thus inequality (\ref{eq:excess_risk_inequality}) is automatically satisfied.

On the other hand, when $\mathcal{E}_{K-1} \geq 0$, (\ref{eq:empirical_risk_diff}) implies that
\begin{equation}
    \begin{aligned}
    \mathcal{E}_{K-1} - \mathcal{E}_{K}
    &\geq 4\sum_{\mathrm{t} \in \widehat T^\text{Cov}_{K - 1}}\frac{N_{\mathrm{t}}}{N}  \widehat{\mathcal{CS}}(\mathrm{t}) \\
    &= 4\sum_{\mathrm{t} \in \widehat T^\text{Cov}_{K - 1},  \mathcal{E}_{K-1}(t) > 0}\frac{N_{\mathrm{t}}}{N}  \widehat{\mathcal{CS}}(\mathrm{t}) +  4\sum_{\mathrm{t} \in \widehat T^\text{Cov}_{K - 1},  \mathcal{E}_{K-1}(t) \leq 0}\frac{N_{\mathrm{t}}}{N}  \widehat{\mathcal{CS}}(\mathrm{t}) \\
    &\geq 4\sum_{\mathrm{t} \in \widehat T^\text{Cov}_{K - 1},  \mathcal{E}_{K-1}(t) > 0}\frac{N_{\mathrm{t}}}{N}  \widehat{\mathcal{CS}}(\mathrm{t}) \\
    &\geq \frac{1}{\|g\|_{\mathrm{TV}}^2}\sum_{\mathrm{t} \in \widehat T^\text{Cov}_{K - 1},  \mathcal{E}_{K-1}(t) > 0}\frac{N_{\mathrm{t}}}{N} { \mathcal{E}^2_{K-1}(t)},
    \end{aligned}
    \label{eq:empirical_risk_diff2}
\end{equation}
where the last inequality follows from Lemma 1.

Then, by Jensen's inequality, 
\begin{equation}
\begin{aligned}
\sum_{\mathrm{t} \in \widehat T^\text{Cov}_{K - 1}, \mathcal{E}_{K-1}(\mathrm{t})>0} \frac{N_{\mathrm{t}}}{N} \mathcal{E}_{K-1}^2(\mathrm{t}) & \geq\left(\sum_{\mathrm{t} \in \widehat T^\text{Cov}_{K - 1}, \mathcal{E}_{K-1}(\mathrm{t})>0} \frac{N_{\mathrm{t}}}{N} \mathcal{E}_{K-1}(\mathrm{t})\right)^2 \\
& \geq\left(\sum_{\mathrm{t} \in \widehat T^\text{Cov}_{K - 1}} \frac{N_{\mathrm{t}}}{N} \mathcal{E}_{K-1}(\mathrm{t})\right)^2 \\
& =\mathcal{E}_{K-1}^2,
\end{aligned}
\label{eq:risk_square}
\end{equation}
where the second inequality holds because $\mathcal{E}_{K-1} \geq 0$. 
Substituting (\ref{eq:risk_square}) into (\ref{eq:empirical_risk_diff2}), we get
\begin{equation}
\mathcal{E}_K \leq \mathcal{E}_{K-1}\left(1-\frac{\mathcal{E}_{K-1}}{\|g\|_{\mathrm{TV}}^2}\right), \quad K \geq 1.
\label{eq:induction_start}
\end{equation}

To finish the proof, we use mathematical induction to show that for all $K \geq 1$
\begin{equation}
\mathcal{E}_K \leq \frac{\|g\|_{\mathrm{TV}}^2}{K + 3}.
\label{eq:induction_target}
\end{equation}
For the base case when $K = 1$, by (\ref{eq:induction_start}) we have
\[
\mathcal{E}_K \leq \mathcal{E}_{K-1}\left(1-\frac{\mathcal{E}_{K-1}}{\|g\|_{\mathrm{TV}}^2}\right) \leq \frac{\|g\|_{\mathrm{TV}}^2}{4} = \frac{\|g\|_{\mathrm{TV}}^2 }{K + 3} .
\]
Assume that (\ref{eq:induction_target}) holds for $K = k \geq 1$, for $K = k + 1$, if 
$
    \mathcal{E}_{k} \leq \frac{\|g\|_{\mathrm{TV}}^2 }{k + 4},
$
then by the monotonicity of $\mathcal{E}_{K}$,
\[
\mathcal{E}_{K} \leq \mathcal{E}_{k} \leq  \frac{\|g\|_{\mathrm{TV}}^2 }{k + 4} = \frac{\|g\|_{\mathrm{TV}}^2 }{K + 3},
\]
which completes the inductive step. Otherwise, if
$
    \mathcal{E}_{k} > \frac{\|g\|_{\mathrm{TV}}^2 }{k + 4},
$
then by (\ref{eq:induction_start}),
\[
    \mathcal{E}_K \leq \mathcal{E}_{k}\left(1-\frac{\mathcal{E}_{k}}{\|g\|_{\mathrm{TV}}^2}\right) \leq \frac{\|g\|_{\mathrm{TV}}^2 }{k + 3} \left(1 - \frac{1}{k + 4}\right) = \frac{\|g\|_{\mathrm{TV}}^2 }{k + 4} = \frac{\|g\|_{\mathrm{TV}}^2 }{K + 3},
\]
where the second inequality follows from the inductive assumption and the inductive step is completed. As a result, by mathematical induction, (\ref{eq:induction_target}) holds for all $K \geq 1$, which implies that
\[
\widehat{\mathcal{R}}\left(\hat{g}\left(\widehat T^\text{Cov}_{K}\right)\right)\leq \widehat{\mathcal{R}}(g)  + \frac{\|g\|_{\mathrm{TV}}^2}{K + 3},
\]
for any candidate model $g \in \mathcal{G}^1$ and any depth $K \geq 1$. Taking the infimum of $g$ over $\mathcal{G}^1$ finishes the proof.
\end{proof}

\subsection{Proof of Theorem 4}
The proof of Theorem 4 follows by combining the result of Theorem 3 with the general theory of data-dependent partitioning estimates \citep[Chapter~13]{gyorfi2002}, as established by Theorem 4.3 in \cite{klusowski2024}.

Specifically, we first assume that the response data is bounded, i.e., $\left|y_i\right| \leq U, i=1,2, \ldots, N$, for some $U \geq \left\|g^*\right\|_{\infty}$. We decompose the $L_2$ error of $\widehat T^\text{Cov}_{K}$ as   $\left\|\hat{g}\left(\widehat T^\text{Cov}_{K}\right)-g^*\right\|^2=E_1 + E_2$, where 
\begin{equation}
E_1:=\left\|\hat{g}\left(\widehat T^\text{Cov}_{K}\right)-g^*\right\|^2-2\left(\left\|y-\hat{g}\left(\widehat T^\text{Cov}_{K}\right)\right\|_{\mathcal{D}}^2-\left\|y-g^*\right\|_{\mathcal{D}}^2\right)-\alpha-\beta,
\end{equation}
and 
\begin{equation}
E_2:=2\left(\left\|y-\hat{g}\left(\widehat T^\text{Cov}_{K}\right)\right\|_{\mathcal{D}}^2-\left\|y-g^*\right\|_{\mathcal{D}}^2\right)+\alpha+\beta,
\end{equation}
and $\alpha, \beta > 0$ to be chosen later. Then, by Theorem 3, for any candidate model $g \in \mathcal{G}_1$, we have
\begin{equation}
E_2 \leq 2\left(\|y-g\|_{\mathcal{D}}^2-\left\|y-g^*\right\|_{\mathcal{D}}^2\right)+\frac{2\|g\|_{\mathrm{TV}}^2}{K+3}+\alpha+\beta.
\end{equation}
Taking expectations on both sides gives 
\begin{equation}
\begin{aligned}
\mathbb{E}_{\mathcal{D}}\left(E_2\right) & \leq 2 \mathbb{E}_{\mathcal{D}}\left(\|y-g\|_{\mathcal{D}}^2-\left\|y-g^*\right\|_{\mathcal{D}}^2\right)+\frac{2\|g\|_{\mathrm{TV}}^2}{K+3}+\alpha+\beta \\
& =2\left\|g-g^*\right\|^2+\frac{2\|g\|_{\mathrm{TV}}^2}{K+3}+\alpha+\beta,
\end{aligned}
\label{eq:expectation_E2}
\end{equation}
where the last equality follows from $\mathbb{E}_{\mathcal{D}}\left(\|y-g\|_{\mathcal{D}}^2-\left\|y-g^*\right\|_{\mathcal{D}}^2\right) = \left\|g-g^*\right\|^2$ from the law of iterated expectations.

To bound $E_1$, let 
\begin{equation}
\Pi_N:=\left\{\mathcal{P}\left(\left\{\left(\mathbf{x}_1, y_1\right),\left(\mathbf{x}_2, y_2\right), \ldots,\left(\mathbf{x}_N, y_N\right)\right\}\right):\left(\mathbf{x}_i, y_i\right) \in \mathbb{R}^p \times \mathbb{R}\right\}
\end{equation}
 be the family of all achievable partitions $\mathcal{P}$ by growing a depth $K$ tree on $N$ points. Define 
\[
M\left(\Pi_N\right):=\max \left\{\# \mathcal{P}: \mathcal{P} \in \Pi_N\right\}
\]
 to be the maximum number of terminal nodes among all partitions in $\Pi_N$. For a set $\mathbf{z}^N=\left\{\mathbf{z}_1, \mathbf{z}_2, \ldots, \mathbf{z}_N\right\} \subset \mathbb{R}^p$, let $\Delta\left(\mathbf{z}^N, \Pi_N\right)$ be the number of distinct partitions of $\mathbf{z}^N$ induced by elements of $\Pi_N$. In other words, $\Delta\left(\mathbf{z}^N, \Pi_N\right)$ is the number of different partitions $\left\{\mathbf{z}^N \cap A: A \in \mathcal{P}\right\}$, for $\mathcal{P} \in \Pi_N$. The partitioning number $\Delta_N\left(\Pi_N\right)$ is defined by 
 \[
 \Delta_N\left(\Pi_N\right):=\max \left\{\Delta\left(\mathbf{z}^N, \Pi_N\right): \mathbf{z}_1, \mathbf{z}_2, \ldots, \mathbf{z}_N \in \mathbb{R}^p\right\},
 \]
 which is the maximum number of different partitions of any set of $N$ points that can be induced by members of $\Pi_N$. Let $\mathcal{G}_N$ denote the collection of all piecewise constant functions, bounded by $U$, defined on partitions $\mathcal{P} \in \Pi_N$.

It follows from Theorem 11.4 of \cite{gyorfi2002}, by setting $\epsilon = 1/2$ in their notation, that
\begin{equation}
\begin{aligned}
\Prob{E_1 \geq 0} & \leq \ProbB{\exists g(\cdot) \in \mathcal{G}_N:\left\|g-g^*\right\|^2 \geq 2\left(\|y-g\|_{\mathcal{D}}^2-\left\|y-g^*\right\|_{\mathcal{D}}^2\right)+\alpha+\beta}\\
& \leq 14 \sup _{\mathbf{x}^N} \mathcal{N}\left(\frac{\beta}{40 U}, \mathcal{G}_N, \mathscr{L}_1\left(\mathbb{P}_{\mathbf{x}^N}\right)\right) \exp \left(-\frac{\alpha N}{2568 U^4}\right),
\end{aligned}
\label{eq:E1_tail_bound}
\end{equation}
where $\mathbf{x}^N=\left\{\mathbf{x}_1, \mathbf{x}_2, \ldots, \mathbf{x}_N\right\} \subset \mathbb{R}^p$ and $\mathcal{N}\left(r, \mathcal{G}_N, \mathscr{L}_1\left(\mathbb{P}_{\mathbf{x}^N}\right)\right)$ denotes the covering number for $\mathcal{G}_N$ by $\mathscr{L}_1$ balls of radius $r > 0$ with respect to the empirical discrete measure $\mathbb{P}_{\mathbf{x}^N}$ on $\mathbf{x}^N$.

Next, we bound the covering number using Lemma 13.1 and Theorem 9.4 of \cite{gyorfi2002}, following the argument in their proof of Theorem 13.1 with $\epsilon = ({40}/{32})\beta$. Specifically, we have
\begin{equation}
\mathcal{N}\left(\frac{\beta}{40 U}, \mathcal{G}_N, \mathscr{L}_1\left(\mathbb{P}_{\mathbf{x}^N}\right)\right) \leq \Delta_N\left(\Pi_N\right)\left(\frac{40}{32} \cdot \frac{333 e U^2}{\beta}\right)^{2 M\left(\Pi_N\right)} \leq \Delta_N\left(\Pi_N\right)\left(\frac{417 e U^2}{\beta}\right)^{2^{K+1}},
\end{equation}
where the last inequality uses that any binary partition tree of depth $K$ has at most $2^K$ terminal nodes, i.e., $M(\Pi_N) \leq 2^K$.

\cite{klusowski2024} further shows that the partitioning number $\Delta_N\left(\Pi_N\right)$ satisfies 
\[
\Delta_N\left(\Pi_N\right) \leq((N-1) p)^{2^K-1} \leq(N p)^{2^K}.
\]
Thus, we can further bound the covering number by
\begin{equation}
\mathcal{N}\left(\frac{\beta}{40 U}, \mathcal{G}_N, \mathscr{L}_1\left(\mathbb{P}_{\mathbf{x}^N}\right)\right) \leq(N p)^{2^K}\left(\frac{417 e U^2}{\beta}\right)^{2^{K+1}}.
\label{eq:cover_bound}
\end{equation}
Applying  (\ref{eq:cover_bound}) to (\ref{eq:E1_tail_bound}) yields
\[
\Prob{E_1 \geq 0} \leq 14(N p)^{2^K}\left(\frac{417 e U^2}{\beta}\right)^{2^{K+1}} \exp \left(-\frac{\alpha N}{2568 U^4}\right).
\]
We can choose $\alpha = O(\frac{U^4 2^K \log (N p)}{N})$ and $\beta = O(\frac{U^2}{N})$ such that $\Prob{E_1 \geq 0} \leq C_1^{\prime} / N$ for some universal constant $C_1^{\prime} > 0$. Moreover, since the response variable is bounded by $U$ by assumption, we have $E_1 \leq\left\|\hat{g}\left(\widehat T^\text{Cov}_K\right)-g^*\right\|^2+2\left\|y-g^*\right\|_{\mathcal{D}}^2 \leq 12 U^2$. It then follows that 
\begin{equation}
\mathbb{E}_{\mathcal{D}}\left(E_1\right) \leq 12 U^2 \cdot \Prob{E_1 \geq 0} \leq \frac{12 C_1^{\prime} U^2}{N}.
\label{eq:expectation_E1}
\end{equation}
Combining (\ref{eq:expectation_E2}) and (\ref{eq:expectation_E1}), and substituting the chosen values of $\alpha$ and $\beta$, we obtain
\begin{equation}
\begin{aligned}
\mathbb{E}_{\mathcal{D}}\left(\left\|\hat{g}\left(\widehat T^\text{Cov}_K\right)-g^*\right\|^2\right) & =\mathbb{E}_{\mathcal{D}}\left(E_1\right)+\mathbb{E}_{\mathcal{D}}\left(E_2\right) \\
& \leq 2\left\|g-g^*\right\|^2+\frac{2\|g\|_{\mathrm{TV}}^2}{K+3}+C_1^{\prime \prime}\left(\frac{U^4 2^K \log (N p)}{N}+\frac{U^2}{N}\right),
\end{aligned}
\label{eq:risk_bound_bounded}
\end{equation}
where $C_1^{\prime \prime} > 0$ is a universal constant.

When the response data is unbounded, let $E=\bigcap_i\left\{\left|y_i\right| \leq U\right\}$, where $U \geq\left\|g^*\right\|_{\infty}$. Then, 
\begin{equation}
\begin{aligned}
\mathbb{E}_{\mathcal{D}}\left(\left\|\hat{g}\left(\widehat T^\text{Cov}_K\right)-g^*\right\|^2\right)= & \mathbb{E}_{\mathcal{D}}\left(\left\|\hat{g}\left(\widehat T^\text{Cov}_K\right)-g^*\right\|^2 \ind{E}\right)+\mathbb{E}_{\mathcal{D}}\left(\left\|\hat{g}\left(\widehat T^\text{Cov}_K\right)-g^*\right\|^2 \ind{E^c}\right) \\
\leq & 2\left\|g-g^*\right\|^2+\frac{2\|g\|_{\mathrm{TV}}^2}{K+3}+C_1^{\prime \prime}\left(\frac{U^4 2^K \log (N p)}{N}+\frac{U^2}{N}\right) \\
& +\mathbb{E}_{\mathcal{D}}\left(\left\|\hat{g}\left(\widehat T^\text{Cov}_K\right)-g^*\right\|^2 \ind{E^c}\right),
\end{aligned}
\label{eq:error_overall}
\end{equation}
 where $E^c$ denotes the complement of the event $E$ and the last inequality follows from (\ref{eq:risk_bound_bounded}). Furthermore, by the subadditivity of probability and the sub-Gaussian noise condition, we have
\begin{equation}
    \Prob{E^c} \leq N\Prob{\left|y\right| > U} \leq N\Prob{\left|y\right| > U - \left\|g^*\right\|_{\infty}} \leq 2 N \exp \left(-\frac{\left(U-\left\|g^*\right\|_{\infty}\right)^2}{2 \sigma^2}\right).
\end{equation}
Then, by the Cauchy-Schwarz inequality
\begin{equation}
\begin{aligned}
\mathbb{E}_{\mathcal{D}}\left(\left\|\hat{g}\left(\widehat T^\text{Cov}_K\right)-g^*\right\|^2 \ind{E^c}\right) 
& \leq \sqrt{\mathbb{E}_{\mathcal{D}}\left(\left\|\hat{g}\left(\widehat T^\text{Cov}_K\right)-g^*\right\|^4\right) \Prob{E^c}} \\
& \leq \sqrt{2 N \cdot \mathbb{E}_{\mathcal{D}}\left(\left\|\hat{g}\left(\widehat T^\text{Cov}_K\right)-g^*\right\|^4\right)} \exp \left(-\frac{\left(U-\left\|g^*\right\|_{\infty}\right)^2}{4 \sigma^2}\right) \\
& \leq \sqrt{2 N \cdot \left[\left\{\mathbb{E}_{\mathcal{D}}\left(\left\|\hat{g}\left(\widehat T^\text{Cov}_K\right)\right\|^4\right)\right\}^{1/4} + \left\|g^*\right\|_{\infty} \right]^4} \exp \left(-\frac{\left(U-\left\|g^*\right\|_{\infty}\right)^2}{4 \sigma^2}\right) \\
& \leq \sqrt{16 N\left\{\mathbb{E}_{\mathcal{D}}\left(\left\|\hat{g}\left(\widehat T^\text{Cov}_K\right)\right\|^4\right)+\left\|g^*\right\|_{\infty}^4\right\}} \exp \left(-\frac{\left(U-\left\|g^*\right\|_{\infty}\right)^2}{4 \sigma^2}\right),
\end{aligned}
\label{eq:error_Ec}
\end{equation}
where the second to last inequality follows from the triangular inequality with respect to $\mathscr{L}^4$ norm and the last one follows from the inequality $(a + b)^4 \leq 8(a^4 + b^4)$. Notice that $\mathbb{E}_{\mathcal{D}}\left(\left\|\hat{g}\left(\widehat T^\text{Cov}_K\right)\right\|^4\right) \leq \mathbb{E}_{\mathcal{D}}\left(\max _i\left|y_i\right|^4\right) \leq N \mathbb{E}\left(|y|^4\right) \leq 8 N\left\{\left\|g^*\right\|_{\infty}^4+\mathbb{E}\left(\varepsilon^4\right)\right\}$. Substituting $U=\left\|g^*\right\|_{\infty}+2 \sigma \sqrt{2 \log (N)}$ into (\ref{eq:error_Ec}), we obtain 
\[
\mathbb{E}_{\mathcal{D}}\left(\left\|\hat{g}\left(\widehat T^\text{Cov}_K\right)-g^*\right\|^2 \ind{E^c}\right) \leq \frac{C_1^{\prime \prime \prime}}{N},
\] for some constant $C_1^{\prime \prime \prime} > 0$ that depends only on $\left\|g^*\right\|_{\infty}$ and $\sigma^2$. Plugging this choice of $U$ and applying (\ref{eq:error_Ec}) to (\ref{eq:error_overall}), we obtain
\[
\mathbb{E}_{\mathcal{D}}\left(\left\|\hat{g}\left(\widehat T^\text{Cov}_K\right)-g^*\right\|^2\right) \leq 2\left\|g-g^*\right\|^2+\frac{2\|g\|_{\mathrm{TV}}^2}{K+3}+C_1 \frac{2^K \log ^2(N) \log (N p)}{N},
\]
for some constant $C_1 > 0$ that depends only on $\left\|g^*\right\|_{\infty}$ and $\sigma^2$. The proof is finished by taking the infimum of $g$ over $\mathcal{G}^1$.

\end{document}